\renewcommand*\footnoterule{}
  \newtheorem{theorem}{Theorem}
\newtheorem{lemma}{Lemma}
\newtheorem{corollary}{Corollary}
\newtheorem{fact}{Fact}
\providecommand{\customgenericname}{}
\newcommand{\newcustomtheorem}[2]{%
  \newenvironment{#1}[1]
  {%
   \renewcommand\customgenericname{#2}%
   \renewcommand\theinnercustomgeneric{##1}%
   \innercustomgeneric
  }
  {\endinnercustomgeneric}
}
\def\old@comma{,}
    \old@comma\discretionary{}{}{}%
\newcommand\numberthis{\addtocounter{equation}{1}\tag{\theequation}}
\definecolor{darkblue}{rgb}{0.1,0.1,0.8}
\definecolor{DarkGreen}{rgb}{0,0.6,0}
\definecolor{brickred}{rgb}{0.8, 0.25, 0.33}
\definecolor{britishracinggreen}{rgb}{0.0, 0.26, 0.15}
\definecolor{calpolypomonagreen}{rgb}{0.12, 0.3, 0.17}
\definecolor{ao(english)}{rgb}{0.0, 0.5, 0.0}
	\definecolor{cadmiumgreen}{rgb}{0.0, 0.42, 0.24}
\definecolor{burgundy}{rgb}{0.5, 0.0, 0.13}
\newcommand{\addv}[3]{%
	\iftoggle{Track}{%
    	\IfEqCase{#1}{%
       	 	{a}{\ifthenelse{\equal{#2}{ON}}{{\color{cadmiumgreen}#3}}{#3}}%
        	{b}{\ifthenelse{\equal{#2}{ON}}{{\color{brickred}#3}}{#3}}%
       		{c}{\ifthenelse{\equal{#2}{ON}}{{\color{burgundy}#3}}{#3}}%
       		{d}{\ifthenelse{\equal{#2}{ON}}{{\color{red}#3}}{#3}}
    	}[\PackageError{tree}{Undefined option to tree: #1}{}]%
	}{#3}%
}
\newcounter{relctr} 
\everydisplay\expandafter{\the\everydisplay\setcounter{relctr}{0}} 
\def\etal{\textit{et al.}}
\global\long\def\RR{\mathbb{R}}
\global\long\def\NN{\mathbb{N}}
\global\long\def\EE{\mathbb{E}}
\global\long\def\PP{\mathbb{P}}
\global\long\def\11{\mathds{1}}
\newcommand{\bfx}{\mathbf{x}}
\newcommand{\bfX}{\mathbf{X}}
\global\long\def\+{\oplus}
\newcommand\pmm{\{-1,1\}}
\newcommand{\prob}[1]{\PP\Big\{  #1 \Big\} }
\def\<{\langle}
\def\>{\rangle}
\DeclareMathOperator*{\sign}{sign}
  \renewcommand{\var}{\mathsf{var}}
  \newcommand{\var}{\mathsf{var}}
 \newcommand{\abs}[1]{\lvert#1\rvert}
 \newcommand{\norm}[1]{\lVert#1\rVert}
  \renewcommand{\set}[1]{\left\{#1\right\}}
  \newcommand{\set}[1]{\left\{#1\right\}}
\newcommand*{\medcup}{\mathbin{\scalebox{1}{\ensuremath{\bigcup}}}}%
\DeclareMathOperator*{\argmin}{arg\,min}
\def\deq{\mathrel{\ensurestackMath{\stackon[1pt]{=}{\scriptstyle\Delta}}}}
\def\fS{{f}_{\mathcal{S}}}
\def\fJ{f^{\mathcal{J}}}
\def\hJ{h_{\mathcal{J}}}
\def\gS{g_{\mathcal{S}}}
\def\ps{\chi_\mathcal{S}}
\def\pS{\ps}
\def\festJ{\hat{f}^{\mathcal{J}}}
\def\Jalg{\hat{\mathcal{J}}}
\def\fJalg{\hat{f}^{\Jalg}}
\def\predict{\hat{g}}
\def\LD{\mathcal{L}_2(D)}
\def\L2{\LD}
\def\Loss{\mathcal{L}_D}
\def\loss{\Loss}
\newcommand{\Lopt}{\emph{opt}}
\def\Lossemp{\mathcal{L}_{\hat{D}}}
\newcommand{\Popt}{\Lopt}
\newcommand{\Pek}{\Lopt}
\def\Ltwo{\mathcal{L}_2}
\def\Pemp{{\PP_{\Demp}}}
\def\Demp{\hat{D}}
\def\Pekemp{\widehat{opt}}
\def\Pk{\mathcal{P}_k}
\def\PiJ{\Pi_Y^{\mathcal{J}}}
\def\aS{a_{\mathcal{S}}}
\def\ahatS{\hat{a}_{\mathcal{S}}}
\def\unif{\emph{unif}}
\newacro{ptp}[PtP]{Point-to-Point}
\newacro{iid}[i.i.d.]{independent and identically distributed} 
\newacro{IID}[i.i.d.]{independent and identically distributed} 
\newacro{UFFS}[UFFS]{Unsupervised Fourier Feature Selection}
\newacro{SFFS}[SFFS]{Supervised Fourier Feature Selection}
\newacro{LS}[LS]{Laplacian Score}
\newacro{MAE}[MAE]{mean absolute error}
\newacro{MSE}[MSE]{mean square error}
\newacro{MMSE}[MMSE]{minimum mean square estimation}
\newacro{PAC}[PAC]{\textit{probably approximately correct}}
\newacro{VC}[VC]{Vapnik–Chervonenkis}
\newacro{ERM}[ERM]{Empirical Risk Minimization}
\newacro{SVM}[SVM]{support-vector machine}
\newacro{POVM}[POVM]{positive operator-valued measure}
\newacro{QLD}[QLD]{quantum low-degree algorithm}
\newacro{QNN}[QNN]{quantum neural networks}
\newacro{SGD}[SGD]{stochastic gradient descent}
\newcommand{\addvb}[1]{\addv{b}{off}{#1}}
\def\footnoterule{\kern-3\p@
  \hrule \@width 6.5in \kern 2.6\p@} 
\begin{document}
\setlength{\baselineskip}{16pt}	
%

%

\title{Agnostic PAC Learning of $k$-juntas Using $L_2$-Polynomial Regression}

\author{

\IEEEauthorblockN{ Mohsen Heidari \IEEEauthorrefmark{1} and  Wojciech Szpankowski \IEEEauthorrefmark{2}}\\
       \IEEEauthorrefmark{1}   Department of Computer Science, Indiana University, Bloomington \\
      \IEEEauthorrefmark{2} 
Department of Computer Science, Purdue University\\
\IEEEauthorrefmark{1}\tt mheidar@iu.edu,
 \IEEEauthorrefmark{2}\tt szpan@purdue.edu
   \thanks{This work was partially supported by the NSF Center for Science of Information (CSoI) Grant
CCF-0939370, and also by NSF Grants CCF-2006440, CCF-2007238, CCF-2211423, and Google Research Award.}   }

\IEEEoverridecommandlockouts
\maketitle

%
%
%
%
%

\begin{abstract}%
  Many conventional learning algorithms rely on loss functions other than the natural 0-1 loss for computational efficiency and theoretical tractability.  Among them are approaches based on  absolute loss ($\mathcal{L}_1$ regression) and  square loss ($\mathcal{L}_2$ regression). The first is proved to be an \textit{agnostic} PAC learner for various important concept classes such as \textit{juntas}, and  \textit{half-spaces}. On the other hand, the second is preferable because of its computational efficiency, which is linear in the sample size. However, PAC learnability is still unknown as guarantees have been proved only under distributional restrictions. The question of whether  $\mathcal{L}_2$ regression is an agnostic PAC learner for 0-1 loss has been open since 1993 and yet has to be answered.  
  
  This paper resolves this problem for the junta class on the Boolean cube --- proving agnostic PAC learning of $k$-juntas using  $\mathcal{L}_2$ polynomial regression. Moreover, we present a new PAC learning algorithm based on the Boolean Fourier expansion with lower computational complexity. 
  Fourier-based algorithms, such as \citet{linial1993constant}, have been used under distributional restrictions, such as uniform distribution. We show that with an appropriate change, one can apply those algorithms in agnostic settings without any distributional assumption.    We prove our results by connecting the PAC learning with 0-1 loss to the \ac{MMSE} problem. We derive an elegant upper bound on the 0-1 loss in terms of the MMSE error.  Based on that, we show that the sign of the \ac{MMSE} is a PAC learner for any concept class containing it.
  \end{abstract}

\section{Introduction}
To gain computational efficiency or analytic tractability, many conventional learning methods such as \ac{SVM} rely on intermediate loss functions other than the natural 0-1 loss. Absolute difference ($\mathcal{L}_1$ distance) is an example.  It is known that polynomial regression under $\mathcal{L}_1$ distance leads to \textit{agnostic} \ac{PAC} learners \citep{Kalai2008} for various hypothesis classes such as $k$-juntas, \textit{polynomial-approximated} predictors, and \textit{half-spaces}. However,  the running time of computing $\mathcal{L}_1$ distance is quadratic in sample size and hence prohibitive for large data sets. 

Square loss  ($\mathcal{L}_2$ distance), on the other hand, is an alternative with computational complexity linear in the size of the data. This has been an incentive to use learning algorithms  such as the \textit{low-degree} algorithm \citep{linial1993constant} and LS-SVM \citep{Suykens1999}. From the learning theoretic perspective,  PAC learning using $\mathcal{L}_2$-based approaches has been studied for the aforementioned concept classes, but with distributional assumptions  \citep{linial1993constant,Kalai2008,Jackson2006}. 

For instance, under the \textit{realizability} assumption, where zero generalization loss is possible ($opt=0$), the $\Ltwo$-polynomial regression is a PAC learner. In addition to the realizability assumption, under the uniform input distribution, the low-degree algorithm is also a PAC learner \citep{mossel2004learning,Mossel_ODonnell,blais2010polynomial}. Under the distribution-free (\textit{agnostic)} setting, PAC bounds of the form $c(\Popt)$ with $\Popt$ being the minimum loss of the class and $c$ a constant as high as $c=8$, have been proved so far for various concept classes \citep{Kalai2008,Kearns1994,Jackson2006}.  
Therefore, \textit{agnostic} PAC learnability of $\mathcal{L}_2$-based approaches is still open and yet to be determined. 

This paper resolves this problem for learning $k$-juntas on the Boolean cube, i.e., Boolean functions over $d$ inputs whose output depends on at most $k<d$ variables, where $k$ is typically a constant much smaller than $d$. Learning juntas has been studied extensively in the literature with  various motivations such as feature selection in machine learning \citep{Guyon2003,blais2010polynomial,HeidariICML2021,Kalai2008,Klivans2009,Birnbaum2012,Diakonikolas2019}.
We prove that agnostic PAC learning is possible using $\Ltwo$-polynomial regression for $k$-juntas. Moreover, we present a more efficient variant of $\Ltwo$ regression using a  Boolean Fourier expansion. We show that this algorithm is also an agnostic PAC learner with respect to $k$-juntas.  This result implies that Fourier algorithms such as the low-degree algorithm of \citet{linial1993constant} that were initially designed for uniform distribution also apply to agnostic settings. 

\subsection{Summary of the Contributions}
\noindent\textbf{Learning $k$-juntas with least square regression:} The focus of this paper is PAC learning of $k$-junta class, on Boolean inputs, using $\mathcal{L}_2$-regression and with the usual 0-1 loss. Following the standard \ac{PAC} learning model, the training set contains $n$ samples $\set{(\bfx(i),y(i))}_{i=1}^n$ with feature-vectors $\bfx(i)\in \pmm^d$ and binary labels $y(i)\in \pmm$.   The objective of the $\Ltwo$-polynomial regression is to minimize the empirical square loss between the target label $y$ and a polynomial $p(\bfx)$ of degree up to $k$. Given such a polynomial, a predictor $g$ is created by simply taking the sign of this polynomial as $g(\bfx) = \sign[p(x)]$.

The first main result of this paper shows that $\Ltwo$ polynomial regression agnostically PAC learns $k$-juntas. More precisely, with probability at least $(1-\delta)$, the generalization loss of the predictor $g$ is within a small deviation of the optimal loss among all $k$-juntas, i.e., 
$\PP\{Y\neq g(\bfX)\} \leq \Popt +\epsilon,$ with $\Popt$ being the optimal loss in $k$-junta class. 
More formally, we prove the following theorem.

\begin{customthm}{1 (abbreviated)}
Given $k \leq d$, there is an algorithm based on $\Ltwo$-polynomial regression with degree limit  $k$ (Algorithm \ref{alg:L2}) that agnostically PAC learns $k$-juntas with sample complexity $n$ up to $O(\frac{k2^k}{\epsilon^2}\log \frac{d}{\delta \epsilon^2})$ and computational complexity  $O(n d^{\Theta(k)})$.
\end{customthm}

 We note the computational complexity of learning $k$-juntas with the $\mathcal{L}_1$-polynomial regression is $O(n^2d^{(3+\omega)3k})$ which is worse for large $n$.



One of the main technical challenges in proving PAC bounds with $\mathcal{L}_1$ or $\mathcal{L}_2$ regression is analyzing the connections between the 0-1 loss and the square or absolute loss. Conventional results for $\mathcal{L}_2$ rely on the inequality $\11\set{y\neq \sign[p(x)]}\leq (y-p(x))^2$ that holds for $y\in \pmm$. Based on this bound, the PAC bound $8\Popt$ is derived \citep{linial1993constant}. Hence, this raises the question as to whether taking the $\sign$ is optimal in $\mathcal{L}_2$-based PAC learning. When $x\in \pmm^d$, Blum \etal~ \citep{Blum1994} and Jackson \citep{Jackson2006} proposed a clever idea of randomized rounding instead of taking the sign. As a result, they improved the factor from $8\Popt$ to $2\Popt$. In Section \ref{sec:sign}, we argue that these bounds are loose, at least for binary inputs. We prove new bounds connecting the 0-1 loss and the square loss (Lemma \ref{lem:norm2 sign} for binary input and Lemma \ref{lem:Pe of sign p-theta} for real-valued inputs). Using these results, we show that for $k$-junta class, taking the sign is not problematic and gives $\Popt$, hence $\mathcal{L}_2$-based agnostic PAC learnability. Moreover, we improve the factor $8\Popt$ to $2\Popt$ for more general classes with $x\in \RR^d$.

Our approach relies on a framework using vector spaces equipped with probability measures as a proxy to derive PAC learning bounds. Among others, we consider a joint vector space for functions on the feature-label set $\mathcal{X}\times\mathcal{Y}$, incorporating the sample-label relation and the underlying joint distribution $D$. 
 This approach establishes our results by connecting the PAC learning model and powerful tools for analyzing vector spaces. 
   Notably, we prove an elegant upper bound on the 0-1 loss based on amenable quantities such as $1$-norm and $2$-norm (see Corollary \ref{cor:kjunta_nonbinary} and \ref{lem:norm2 sign} in Section \ref{sec:analysis}).  A notable feature of our approach is that the expressions are quite compact and insightful. 


\begin{table*}[ht]
\centering
\caption{Comparison of the PAC-learning algorithms for $k$-juntas.}
\label{tab:PAC compare}
\begin{tabular}{lp{2.5cm}p{2.5cm}p{6cm}}
\toprule 
Algorithm & Sample Cmplx. & Comp. Cmplx. & PAC Error \\ 
\hline \\[-1em]
\begin{minipage}{2.8cm} Brute force ERM\end{minipage} & { $O(\frac{k2^k}{\epsilon^2}\log\frac{d}{\delta})$}  & $O(nd^k2^{2^k})$ & $\Pek+\epsilon$ \\ [3pt]
\hline \\[-1em]
\begin{minipage}{2.8cm}$\mathcal{L}_1$-Poly. Reg. \\\citep{Kalai2008}\end{minipage} & {$O(\frac{1}{\epsilon}k^{\Theta(k)}\log \frac{d}{\delta})$} &  $O(n^2d^{(3+\omega)3k})$ & $\Pek+\epsilon$ \\ [3pt]
\hline \\[-1em]
\begin{minipage}{2.8cm} $\mathcal{L}_2$-Poly. Reg.  \end{minipage}& 
$O(\frac{k2^k}{\epsilon^2}\log\frac{d}{\epsilon^2 \delta})$ & $O(n d^{\Theta(k)})$ &  
\begin{minipage}{4cm}
$\bullet~2\Pek+\epsilon$ \citep{Jackson2006}\\
 $\bullet~ \Pek + \epsilon$, [{\bf \color{red}{Thm.} \ref{tab:PAC compare}}]
\end{minipage}\\[3pt]
\hline \\[-1em]
\begin{minipage}{2.8cm} Low-degree Alg.\\ (\textit{uniform dist.})\\ \citep{linial1993constant} \end{minipage}& $O(k2^k \log\frac{d}{\delta})$ & $O(\frac{nkd^k}{(k-1)!})$ & \begin{minipage}{6cm} $\bullet~ 8\Pek+\epsilon$  \citep{linial1993constant}\\
 $\bullet~2\Popt+\epsilon$ \citep{Jackson2006}\\
 $\bullet~\frac{1}{4}+\Popt(1-\Popt)+\epsilon$ \citep{Kearns1994}\\
 $\bullet~ \Pek + \epsilon$, [{\bf \color{red}{Thm. \ref{thm:Fourier PAC}}}]
 \end{minipage}   \\ [3pt]
\hline \\[-1em]
\begin{minipage}{2.8cm} Stochastic Fourier\\({\bf \color{red}{Algorithm \ref{alg:fourier}}}) \end{minipage}& $O(\frac{k 2^{k}}{\epsilon^2}\log \frac{d}{\delta})$ & $O(\frac{nkd^k}{(k-1)!})$ &  $\Pek+\epsilon$, [{\bf \color{red}{Thm. \ref{thm:Fourier PAC}}}]\\ 
\bottomrule
\end{tabular} 
\end{table*}

\noindent\textbf{Learning with Fourier algorithm:} In addition, we present another more efficient algorithm for binary-valued samples. This algorithm's running time is linear in $n$ and  scales with  $d^k$ which is asymptotically better than the two other approaches as they grow with $d^{O(k)}$. Our result relies on the Boolean Fourier expansion defined for the uniform distribution \citep{Wolf2008,ODonnell2014}. We prove a counter-intuitive result by showing that the uniform Boolean Fourier is in fact applicable to agnostic distribution-free settings.  Motivated by Linial's low-degree algorithm  \citep{linial1993constant} on uniform distribution, we develop a Fourier algorithm that performs $\mathcal{L}_2$ polynomial regression more efficiently and without any distributional assumption. 
We then show that this algorithm also agnostically PAC learns the $k$-junta class. More formally, we prove the following statement.
\begin{customthm}{2 (abbreviated)}
Given $k<d$, the Fourier algorithm (Algorithm \ref{alg:fourier}) agnostically PAC-learns $k$-juntas  with sample complexity $O(\frac{k 2^{k}}{\epsilon^2}\log \frac{d}{\delta})$ and computational complexity $O(\frac{nkd^k}{(k-1)!})$. 
\end{customthm} 

Table \ref{tab:PAC compare} compares  various PAC learning algorithms in terms of their sample complexity, running time, and PAC loss.   The $\mathcal{L}_2$ and Fourier algorithms have lower sample and computational complexities when compared to the other methods.  
When compared to \citep{Kalai2008} using $\mathcal{L}_1$-polynomial regression, we obtain a lower sample complexity and computational complexity. 
Note that the running time of $\mathcal{L}_1$ regression grows with $O(n^2d^{O(k)})$, which is quadratic in sample size $n$ and hence prohibitive in large data sets. The running time of $\mathcal{L}_2$ regression is $O(nd^{O(k)})$ which is linear in $n$. Lastly, the running time of the Fourier algorithm grows with $O(nd^k)$, which is a better exponent than $\mathcal{L}_2$. Overall, given that $k$ is typically a constant independent of $d$, the $\mathcal{L}_2$ regression and the Fourier algorithm are suitable for large data sets. Lastly, we present a lower bound on the sample complexity of the $k$-junta class.  Based on the standard VC-dimension argument which gives $O(\frac{1}{\epsilon^2}(VC+\log (\frac{1}{\delta}))$. The exact expression for the VC dimension of the $k$-junta class is unknown, but it is  between $2^k$ and $2^k+k\log d$.

 \subsection{Related Works} \label{subsec:related approaches}
 The problem of learning juntas is a classical problem in machine learning. 
 There is a large body of work on learning and testing of juntas  \citep{mossel2004learning,Bshouty2016,Liu2019,Arpe2008,Fischer2004,Servedio2015,De2019,Vempala2011,Chen2021,Iyer2021}. Juntas are of significant interest in learning theory as they are connected to other fundamental problems such as learning with feature selection \citep{Guyon2003},  DNF formulas, and decision trees \citep{mossel2004learning}. Particularly, learning with feature selection can be expressed as learning $k$-juntas (with $k$ out of $d$ features). Additionally, every $k$-junta is implemented by a decision tree or DNF formula of size $2^k$ and conversely, any size-$k$ decision tree is also a $k$-junta, and any $k$-term DNF is $\epsilon$-approximated by a $k \log(\frac{k}{\epsilon})$-junta. Hence,  obtaining efficient algorithms for these problems is closely related to learning juntas \citep{mossel2004learning}. PAC learning with respect to $k$-juntas has been studied using various approaches. We briefly review the approaches for learning these concept classes below and summarize them in Table~\ref{tab:PAC compare}.\\
\noindent\textbf{Naive  \ac{ERM}:}
This is  the usual exhaustive search over all predictors to minimize the empirical loss. For $k$-juntas, ERM is an agnostic PAC learning algorithm with  sample complexity $O(\frac{k2^k}{\epsilon^2}\log\frac{d}{\delta})$ and computational complexity $O(nd^k2^{2^k})$ \citep{Shalev2014}. With the computational complexity of doubly exponential with respect to $k$, ERM is prohibitive even for small values of $k$.\\
\noindent\textbf{Learning with $\mathcal{L}_1$ Regression.}
  Kalai \etal~ \citep{Kalai2008} introduced polynomial regression as an approach for PAC learning with the $0-1$ loss function. They showed that $\mathcal{L}_1$-Polynomial regression agnostically PAC learns with respect to $(k,\epsilon)$-concentrated hypothesis class which includes $k$-juntas. 
Adopting this algorithm to $k$-juntas requires a sample complexity $O(\frac{1}{\epsilon}d^{\Theta(k)})$. With a \textit{linear programming} implementation, the computational complexity of this algorithm is $O(n^2d^{(3+\omega)3k})$, where $\omega < 2.4$ is the matrix-multiplication exponent. The quadratic growth of the computational complexity of this approach makes it expensive for large sample sizes. This motivates us to study $\mathcal{L}_2$ based approaches. \\ 
\noindent\textbf{Learning with $\mathcal{L}_2$ Polynomial Regression.}
This approach is similar to its $\mathcal{L}_1$ counterpart with absolute error replaced by the square loss. Fast implementations of $\mathcal{L}_2$ regression with linear complexity in sample size have been studied \citep{Drineas2006,Drineas2010}. PAC learning using this approach has been studied in \citep{Kalai2008,Jackson2006}. In the agnostic setting, it is shown that this approach is a \textit{weak learner} with error $8\Popt$. With the use of a nondeterministic rounding proposed in \citep{Blum1994,Jackson2006}, the PAC bound can be reduced to $2\Popt$. This paper shows that for $k$-juntas $\Popt$ is obtained without any randomized rounding. For other non-binary classes, in Section \ref{sec:other classes}, we prove the bound $2\Popt$.\\
\noindent\textbf{Fourier Algorithms.}
This approach is viewed as a special solution for $\Ltwo$ regression. Linial \etal~ \citep{linial1993constant} investigated PAC learning from an alternative perspective and introduced the well-known ``Low-Degree Algorithm". They provide theoretical guarantees under the {\it uniform} and {\it known} distribution on $\pmm^d$ of the samples. 
The low-degree is based on the Fourier expansion on the Boolean cube. Although  computationally efficient, this algorithm has limited practical applications due to its distributional restrictions --- uniform (and known) distribution is unrealistic in many applications.  Furst \etal~ \citep{furst1991improved} relaxed such a distributional restriction by adopting a low-degree algorithm for learning $AC^0$ functions under the product probability distributions. The Fourier expansion has been  used to analyze Boolean functions  \citep{Wolf2008,ODonnell2014} with a wide range of applications, namely computational learning \citep{linial1993constant,mossel2004learning}, noise sensitivity \citep{ODonnell2014,kalai2005noise,LiMedardISIT18,Heidari_ISIT19}, approximation \citep{blais2010polynomial}, feature selection \citep{HeidariICML2021}, and other information-theoretic problems \citep{courtade2014boolean,weinberger2017optimal,weinberger2018self,Heidari2021ISIT}. 
In this work, we also generalize this approach for \textit{agnostic} PAC learning --- hence, removing the distributional assumptions.


\section{Formulations and Main Results}\label{sec:main results}
\noindent\textbf{Model:} We use the usual formulation of \textit{agnostic} \ac{PAC} learning model \citep{Valiant1984,Kearns1994}. This paper focuses on binary classification with the 0-1 loss. 
 An algorithm  agnostically PAC learns a hypothesis class $\mathcal{H}$, if, for any $\epsilon, \delta \in (0,1)$, and given $n>n(\epsilon,\delta)$  training samples drawn from any distribution $D$, it outputs with probability $(1-\delta)$ a predictor $g$ whose expected loss is at most $\Lopt+\epsilon$,
 where  $\Lopt$ is the minimum  loss in $\mathcal{H}$. 

\noindent{\bf Notation:} For any natural number $d$, the set $\{1,2,\cdots, d\}$ is denoted by $[d]$.  For a pair of functions $f,g$ on $\mathcal{X}$, the notation $f\equiv g$ means that $f(x)=g(x)$ for all $x\in\mathcal{X}$. For any function $h: \mathcal{Z}\rightarrow \RR$ and input distribution $D$, the $1$-norm and $2$-norm  are defined as $\norm{h}_{1, D}:= \EE_D[|h(Z)|]$ and $\norm{h}_{2, D}:=\sqrt{\EE_D[h(Z)^2]}$, respectively.

%

\begin{algorithm}[t]
\caption{ $\Ltwo$-Algorithm}
\label{alg:L2}
\DontPrintSemicolon
\KwIn{Training samples $\mathcal{S}_n=\{(\bfx(i), y(i))\}_{i=1}^n$, degree parameter $k$.}
 \SetKwProg{Fn}{}{:}{} 
\SetKwFunction{main}{LearningAlgorithm}

\For{each subset  $\mathcal{J}\subseteq [d]$ with $|\mathcal{J}|=k$}{
 Find a polynomial $\hat{p}_{\mathcal{J}}$ of degree up to $k$ that minimizes  $\frac{1}{n}\sum_i \big(y(i)-p(\bfx(i))\big)^2$.\;
}
  Select $\hat{p}$ as the $\hat{p}_{\mathcal{J}}$ that has the smallest square loss.\;

\KwRet   $\hat{g}\equiv \sign[\hat{p}]$.\;
\end{algorithm}

\subsection{Warm-Up}\label{sec:sign}
We start with highlighting one of the main difficulties in proving PAC bounds with $\mathcal{L}_1$ or $\mathcal{L}_2$ regression. The main challenge is analyzing the 0-1 loss after taking the $\sign$ of the resulting polynomial. For that, one needs to study the relations between the 0-1 loss and the square or absolute loss.  To see this, let $p$ be the polynomial minimizing the square loss. Then, it is not difficult to see that $\11\set{y\neq \sign[p(x)]}\leq (y-p(x))^2$, where $y\in \pmm$. As a result, the 0-1 loss of $\sign[p]$ is bounded as $\PP\big\{Y\neq \sign[p(\bfX)]\big\}\leq \EE[(Y-p(\bfX))^2].$ This is a loose bound that leads to a PAC bound of $8\Popt$. To see the argument, let $f$ be the optimal predictor with the 0-1 loss $\Popt$. Additionally, suppose $f$ is approximated by a polynomial $\tilde{p}$ with the square error less than $\epsilon^2$. Then, we can write
\begin{align*}
\PP\big\{Y &\neq \sign[p(\bfX)]\big\}\leq \EE[(Y-p(\bfX))^2]\\
&\stackrel{(a)}{\leq} \EE[(Y-\tilde{p}(\bfX))^2]\\
& \stackrel{(b)}{\leq} 2\EE[(Y-f(\bfX))^2 + (f(\bfX)-\tilde{p}(\bfX))^2]\\
& \stackrel{(c)}{\leq}  8\Popt+2\epsilon^2,
\end{align*}
where (a) follows as $p$ is the optimal polynomial, (b) holds from the AM-GM inequality, and (c) holds as $\11\set{y\neq f(x)}=\frac{1}{4}(y-f)^2$ for any $f:\mathcal{X}\rightarrow \pmm$.

These observations raise whether taking the $\sign$ is optimal in PAC learning. When $x\in \pmm^d$, Blum \etal~ \citep{Blum1994} and Jackson \citep{Jackson2006} proposed a clever idea of randomized rounding instead of taking the sign. As a result, they improved the factor from $8\Popt$ to $2\Popt$. 

In  Lemma \ref{lem:norm2 sign}, we prove a tighter bound between the 0-1 loss and the square loss. Using this lemma, we demonstrate in Theorem \ref{thm:L2 PAC} that for $k$-junta class taking the sign is optimal and results in $\Popt$ (i.e., agnostic PAC learnability).  Moreover, we develop a more general analysis and show that sign of the MMSE of $Y$ given the observation $\bfX$ give a PAC learner,  see Theorem \ref{thm:MMSE PAC}.



\subsection{Learning  with $\Ltwo$-Polynomial Regression}\label{sub:L2poly PAC}
We employ a PAC learning algorithm using $\Ltwo$-polynomial regression. Given a training set, the objective of the polynomial regression is to minimize the empirical square loss over all polynomials of degrees up to $k$. This process can be implemented by stochastic gradient descent or solving a linear equations system. 
Based on this regression, one can study PAC learning of various concept classes. In this paper, we consider $k$-juntas. 

\noindent{\bf $k$-junta class:} A $k$-junta is a Boolean function $h:\pmm^d\rightarrow \pmm$ with $d$ input variables whose output depends on at most $k$ out of $d$ inputs. 

For $k$-junta classes, we use a variant of $\Ltwo$-polynomial regression (see Algorithm \ref{alg:L2})  that has the same computational complexity as compared to the vanilla $\mathcal{L}_2$ polynomial regression. 
With this approach, we establish the following theorem. 
 



\begin{theorem}\label{thm:L2 PAC}
Algorithm \ref{alg:L2}, with a degree limit of $k \leq d$, agnostically PAC learns $k$-juntas.  More precisely, 
given $\delta\in [0,1]$, with probability $(1-\delta)$, its generalization loss does not exceed the following  
\begin{equation*}
\Pek + O\big(\sqrt{\frac{2^k+ k\log d}{n}\log \frac{n}{2^k+ k\log d}}\big)+ \sqrt{\frac{\log(1/\delta)}{2n}},
\end{equation*}
where $n$ is the number of samples. Furthermore, the resulting computational complexity is  $O(n d^{\Theta(k)})$.
\end{theorem}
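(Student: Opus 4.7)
I would decompose the 0-1 loss of $\hat g = \sign[\hat p]$ produced by Algorithm \ref{alg:L2} into an \emph{approximation} piece (how well the population $\Ltwo$-regression on the best $k$-subset can do) and an \emph{estimation} piece (the price of using empirical rather than population expectations). The crucial ingredient throughout is Lemma \ref{lem:norm2 sign}, which transfers $\Ltwo$ error to 0-1 error much more tightly than the naive inequality $\11\{y\neq \sign p(\bfx)\} \leq (y-p(\bfx))^2$. Let $f^*$ be an optimal $k$-junta with support $\mathcal{J}^* \subseteq [d]$, $|\mathcal{J}^*| = k$, so that $\Pek = \PP\{Y \neq f^*(\bfX)\}$.

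First I would show that on $\mathcal{J}^*$ the population $\Ltwo$ regression, followed by sign, already attains $\Pek$. The key observation is that every real-valued function on $\pmm^k$ is a multilinear polynomial of degree $\leq k$, so the population minimizer of $\EE[(Y - p(\bfX))^2]$ over polynomials of degree $\leq k$ depending only on $\bfX_{\mathcal{J}^*}$ is exactly the MMSE estimator $p^*(\bfx) = \EE[Y \mid \bfX_{\mathcal{J}^*}]$. The sign of $p^*$ is the Bayes predictor within the $k$-junta class on $\mathcal{J}^*$, so its 0-1 loss is at most $\Pek$. To make this usable for the empirical minimizer $\hat p$ (which is what the algorithm actually returns), I would state Lemma \ref{lem:norm2 sign} in the additive form $\PP\{Y \neq \sign[p(\bfX)]\} \leq \Pek + C\,\|p - p^*\|_{2,D}$, so that $\Ltwo$-closeness of $\hat p$ to $p^*$ propagates directly into 0-1 closeness.

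Second, I would control the estimation error by uniform convergence. Polynomials of degree $\leq k$ supported on a fixed $\mathcal{J}$ with $|\mathcal{J}| = k$ form a $2^k$-dimensional linear space, spanned by the monomials $\prod_{j \in \mathcal{S}} x_j$, $\mathcal{S} \subseteq \mathcal{J}$. Combining a Hoeffding bound for the empirical square loss with a covering-number argument for this $2^k$-dimensional class, and then a union bound over the $\binom{d}{k} \leq d^k$ subsets, I obtain uniformly that the empirical and population square losses differ by $O\bigl(\sqrt{(2^k + k\log d + \log(1/\delta))/n}\bigr)$. Consequently the empirical minimizer $(\hat{\mathcal{J}},\hat p)$ satisfies a bound on $\|\hat p - p^*\|_{2,D}^2$ of the same order, refined by a chaining step that contributes the extra $\log(n/(2^k + k\log d))$ factor visible in the theorem statement. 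Feeding this into Lemma \ref{lem:norm2 sign} yields the claimed sample-complexity bound, and the concentration of the empirical 0-1 loss around its expectation (a final Hoeffding step) produces the $\sqrt{\log(1/\delta)/(2n)}$ tail term.

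For computational complexity, for each of the $\binom{d}{k} = O(d^k/k!)$ subsets, assembling the $2^k \times 2^k$ normal equations from $n$ samples costs $O(n \cdot 4^k)$ and solving them costs $O(8^k)$, for a total of $O(n\, d^{\Theta(k)})$. The main technical obstacle is, as emphasized in Section \ref{sec:sign}, the sign-rounding step: the naive $\Ltwo$-to-0-1 inequality only yields $8\Pek$, and even the randomized rounding of \citep{Blum1994, Jackson2006} only saves a factor of four. Lemma \ref{lem:norm2 sign} is what closes this gap for Boolean inputs, by exploiting the fact that $Y \in \pmm$ and $p^* \in [-1,1]$; the crux of the argument is ensuring that this advantage survives once $p^*$ is replaced by the empirical minimizer $\hat p$, which is why the uniform-convergence layer above must control $\Ltwo$ distances rather than just expected losses.
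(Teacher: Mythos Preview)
Your plan diverges from the paper in where Lemma~\ref{lem:norm2 sign} is applied, and that divergence creates a genuine gap. The paper applies the MMSE machinery to the \emph{empirical} distribution $\Demp$, not $D$: since $\hat p_{\mathcal{J}} = \argmin_{p}\norm{Y-p}_{2,\Demp}$ over degree-$k$ polynomials on the coordinates $\mathcal{J}$, and every real function on $\pmm^k$ is such a polynomial, $\hat p_{\mathcal{J}}$ \emph{is} exactly the empirical MMSE $\EE_{\Demp}[Y\mid \bfX^{\mathcal{J}}]$. Hence the $U(\cdot)$ term in Lemma~\ref{lem:norm2 sign} vanishes identically, and Lemma~\ref{lem:MMSE and 01} (under $\Demp$) gives $\Lossemp(\sign[\hat p_{\mathcal{J}}]) = \tfrac12 - \tfrac12\norm{\hat p_{\mathcal{J}}}_{1,\Demp}$; minimized over $\mathcal{J}$ this equals $\Pekemp$ by Corollary~\ref{cor:kjunta_nonbinary}. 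Thus the algorithm attains the minimum empirical 0-1 loss among $k$-juntas, and the stated generalization bound follows directly from standard VC theory for the 0-1 loss on the $k$-junta class (VC dimension $\leq 2^k + O(k\log d)$). No uniform convergence for squared losses, and no comparison of $\hat p$ to a population $p^*$, is ever needed.

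Your route instead applies Lemma~\ref{lem:norm2 sign} under $D$ and asserts the form $\PP\{Y\neq\sign[p(\bfX)]\} \leq \Pek + C\norm{p-p^*}_{2,D}$. But Lemma~\ref{lem:norm2 sign} actually gives $opt_Z + U(\norm{h - \EE[Y\mid Z]}_{2,D})$: to make $opt_Z = \Pek$ and $\EE[Y\mid Z] = p^*$ you must take $Z = \bfX^{\mathcal{J}^*}$, which forces $h$ to be a function of $\bfX^{\mathcal{J}^*}$. The algorithm's output $\hat p$ lives on the data-selected coordinates $\hat{\mathcal{J}}$, and nothing guarantees $\hat{\mathcal{J}} = \mathcal{J}^*$ (the subset minimizing empirical square loss need not be the one carrying the optimal $k$-junta). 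So the inequality you invoke is not an instance of Lemma~\ref{lem:norm2 sign}, and your subsequent claim that $\norm{\hat p - p^*}_{2,D}$ is small does not follow from uniform convergence of square losses either: $\hat p$ and $p^*$ are projections of $Y$ onto two \emph{different} $2^k$-dimensional subspaces, and a small gap in achieved square loss does not force those projections to be close. The paper's device---moving the entire MMSE argument to $\Demp$, where the regression output \emph{is} the conditional expectation---is precisely what sidesteps this subset-mismatch problem and reduces everything to ERM plus VC.
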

By simplifying the above expression, we  get a sample complexity bound of $n(\delta, \epsilon) = O(\frac{k2^k}{\epsilon^2} \log\frac{d}{\epsilon^2 \delta})$.
The proof is presented in Section \ref{subsec:proof:thm:L2 PAC}.

The polynomial regression procedure in Algorithm \ref{alg:L2} can be implemented via a linear $L_2$ regression in $\RR^{D}$, where $D=d^k$. The factor $d^k$ is because a polynomial of degree up to $k$ is a linear combination of monomials of the form $\prod_{j=1}^k X_{i_j}$, where $i_j\in [d]$.   Linear regression can be implemented via Moore-Penrose (generalized) inverse. The generalized inverse is computed using classical methods in $O(nD^2+D^3)$. Hence, given that $n>D$, we can perform polynomial regression in $O(n D^2)=O(n d^{2k})$. However, we note that under special cases (e.g., $d^k=\lceil{n^r}\rceil)$ the regression can be done in $O(nd^{k\omega(r)})$, where $\omega(r)$ is a constant given in \citep{Gall2018}. Hence, the computational complexity of this algorithm is  $O(nd^{\Theta(k)})$ as noted in Table \ref{tab:PAC compare}.

\subsection{Fourier-Based Learning Algorithm}\label{sub:Fourier PAC concentrated}
We present another $\Ltwo$-based approach that is computationally more efficient than the $\Ltwo$-polynomial regression. The computational cost of the $\Ltwo$ regression grows as $O(nd^{\Theta(k)})$ which is more efficient than its $\mathcal{L}_1$ variant with complexity $O(n^2d^{(3+\omega)3k})$. This leads to the question as to whether the factor $d^{\Theta(k)}$ can be further reduced. We answer this question using a Fourier analysis on the Boolean cube. Particularly, we present an algorithm with the complexity of $O(\frac{nkd^k}{(k-1)!})$.

Our solution is based on the Boolean Fourier expansion applied to the uniform distribution on the Boolean cube \citep{ODonnell2014,Wolf2008}. Surprisingly, we plan to use this Fourier for agnostic settings.  
%
Let us briefly explain the standard Boolean Fourier expansion. 

\begin{fact}[Boolean Fourier]\label{fact:Fourier}
Any (bounded) function $f: \pmm^d\rightarrow \RR$ admits the following decomposition 
\begin{equation*}
 f(\bfx) = \sum_{\mathcal{S}\subseteq [d]} \mathsf{f}_{\mathcal{S}}~\chi_{\mathcal{S}}(\bfx), \quad \forall \bfx\in \pmm^d,
 \end{equation*}
where $ \pS(\bfx)$ is the monomial corresponding to the subset $\mathcal{S}\subseteq [d]$ and is defined as $ \pS(\bfx)= \prod_{j\in \mathcal{S}}x_j$. Further, the coefficients $ \mathsf{f}_{\mathcal{S}}\in \RR$ are called the Fourier coefficients of $f$ and are calculated as $$ \mathsf{f}_{\mathcal{S}} = \frac{1}{2^d}\sum_{\bfx} f(\bfx) \chi_{\mathcal{S}}(\bfx), \quad \forall \mathcal{S}\in [d]$$ 
\end{fact}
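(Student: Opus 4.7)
My plan is to present the decomposition as the expansion of $f$ in an orthonormal basis of the $2^d$-dimensional real vector space $\mathcal{F}$ of all functions $\pmm^d\to\RR$, equipped with the uniform-measure inner product
\[
\langle f,g\rangle \ := \ \frac{1}{2^d}\sum_{\bfx\in\pmm^d} f(\bfx)\, g(\bfx).
\]
The family $\{\chi_{\mathcal{S}}:\mathcal{S}\subseteq[d]\}$ has exactly $2^d$ members, so once I establish the orthonormality relation $\langle \chi_{\mathcal{S}},\chi_{\mathcal{T}}\rangle = \11\{\mathcal{S}=\mathcal{T}\}$, linear independence will be automatic and, by a dimension count, this family will be an orthonormal basis. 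The expansion $f = \sum_{\mathcal{S}}\mathsf{f}_{\mathcal{S}}\,\chi_{\mathcal{S}}$ with $\mathsf{f}_{\mathcal{S}} = \langle f,\chi_{\mathcal{S}}\rangle$ then follows, and unpacking the inner product yields precisely the closed form for $\mathsf{f}_{\mathcal{S}}$ in the statement.

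The key identity I will exploit is $\chi_{\mathcal{S}}(\bfx)\,\chi_{\mathcal{T}}(\bfx) = \chi_{\mathcal{S}\triangle\mathcal{T}}(\bfx)$ for every $\bfx\in\pmm^d$, which holds because any coordinate in $\mathcal{S}\cap\mathcal{T}$ appears squared and $x_j^2 = 1$ on $\pmm$. Combined with $\chi_{\emptyset}\equiv 1$, this lets me rewrite
\[
\langle \chi_{\mathcal{S}},\chi_{\mathcal{T}}\rangle \ = \ \frac{1}{2^d}\sum_{\bfx\in\pmm^d} \chi_{\mathcal{R}}(\bfx), \qquad \mathcal{R} := \mathcal{S}\triangle\mathcal{T}.
\]
I would then factor the right-hand sum over coordinates, writing $\sum_{\bfx\in\pmm^d}\prod_{j\in\mathcal{R}} x_j = \prod_{j\in\mathcal{R}}\bigl(\sum_{x_j\in\pmm} x_j\bigr)\cdot\prod_{j\notin\mathcal{R}}\bigl(\sum_{x_j\in\pmm} 1\bigr)$. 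This vanishes whenever $\mathcal{R}\neq\emptyset$ because $\sum_{x\in\pmm} x = 0$, and it equals $2^d$ when $\mathcal{R}=\emptyset$, giving orthonormality.

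To finish, I will note that $\mathcal{F}$ is isomorphic to $\RR^{2^d}$ via evaluation at the $2^d$ points of $\pmm^d$, so a set of $2^d$ orthonormal elements is necessarily a basis. For any $f\in\mathcal{F}$, the residual $f - \sum_{\mathcal{S}}\langle f,\chi_{\mathcal{S}}\rangle\,\chi_{\mathcal{S}}$ is then orthogonal to every basis element, hence to itself, and must vanish identically on $\pmm^d$. There is no real obstacle in the argument; the only step to handle with care is the coordinate-wise factorization, which relies on the specific cancellation $\sum_{x\in\pmm} x = 0$ and is what ties the closed-form coefficient formula to the uniform measure on the Boolean cube. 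Boundedness of $f$ is not needed beyond ensuring all sums are finite, which is automatic on the finite domain $\pmm^d$.
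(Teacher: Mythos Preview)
Your proof is correct and follows the standard orthonormal-basis argument for the Boolean Fourier expansion. Note, however, that the paper does not actually prove this statement: it is presented as a \emph{Fact} (citing standard references such as \citep{ODonnell2014,Wolf2008}) and is used without proof, so there is no ``paper's own proof'' to compare against. Your argument is exactly the textbook derivation one would find in those references.
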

 This expansion relies on the restriction that the input variables are uniformly distributed over the Boolean cube. This limits the applications of Fourier-based algorithms such as \citep{linial1993constant} to \textit{agnostic} learning problems without any distributional assumptions.   
 This issue can be resolved via a Gram-Schmidt-type orthogonalization process that yields a generalized Boolean Fourier expansion \citep{Heidari2021ISIT}.
 
 However, in this paper, we take a slightly different path and propose a simple adjustment to the standard Boolean Fourier that applies to certain agnostic problems. Hence, we get PAC learnability together with computational efficiency. In what follows, we describe this adjustment. 

Let $D_X$ be any probability distribution on $\pmm^d$ and $f$ be a Boolean function. Define 
\begin{align*}
\fS:= \frac{1}{2^d}\sum_{\bfx} f(\bfx) D_X(\bfx) \chi_{\mathcal{S}}(\bfx).
\end{align*}
From Fact \ref{fact:Fourier}, $\fS$ is the Fourier coefficient of the real-valued function $f(\bfx)D_X(\bfx)$. Note that under the uniform $D_X$, $\fS=\frac{1}{2^d} \mathsf{f}_{\mathcal{S}}$, where $\mathsf{f}_{\mathcal{S}}$ is the Fourier coefficient of $f(\bfx)$ as in Fact \ref{fact:Fourier}. In agnostic settings where $D_X$ is unknown, $\fS$ is not accessible. However, we can estimate it empirically. 


Before explaining the estimation, let us introduce another extension. In agnostic settings, the label $y$ is not necessarily a function of the features $\bfx$. Hence, to make the Fourier expansion applicable to agnostic PAC, we expand it, beyond deterministic function, to stochastic mappings:

  Consider a random vector $\bfX$ and a labeling variable $Y$. Let $(\bfX, Y)\sim D$ where $D$ is a probability distribution over $\pmm^d\times \pmm$. Then the stochastic Fourier coefficients are defined as
\begin{align}\label{eq:Y Fourier}
\aS:= \frac{1}{2^d}\EE[Y \pS(\bfX)], 
\end{align}
for all $\mathcal{S}\subseteq [d]$. If $Y=f(\bfX)$, then $\aS=\fS$. Given the \ac{IID} samples  $\set{x(i), y(i)}_{i=1}^n$, the empirical estimation of $\aS$ is
\begin{align}\label{eq:Y Fourier empirical}
\ahatS :=\frac{1}{2^d} \frac{1}{n}\sum_{i=1}^n  y(i) \chi_{\mathcal{S}}(\bfx(i)).
\end{align}
Note that the estimation is agnostic to the underlying distribution $D$, but we show that it converges to $\aS$.
\begin{lemma}\label{lem:Fourier estimation err}
Let $D$ be any probability distribution on $\pmm^{d+1}$. Let $\mathcal{S}_1, \mathcal{S}_2, \cdots, \mathcal{S}_m$ be $m$ subsets of $[d]$. Given $\delta\in (0,1)$ and $n$  samples drawn \ac{IID} from $D$, the  inequality 
\begin{align*}
\sup_{1\leq j\leq m} |\hat{a}_{\mathcal{S}_j} -a_{\mathcal{S}_j}|\leq \frac{1}{2^d}\sqrt{\frac{1}{2n}\log \frac{2m}{\delta}}
\end{align*}
holds with probability at least $(1-\delta)$, where $\aS$ and $\ahatS$ are given in \eqref{eq:Y Fourier} and \eqref{eq:Y Fourier empirical}, respectively.
\end{lemma}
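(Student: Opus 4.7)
The plan is to prove this with a standard Hoeffding plus union bound argument. The key observation is that since $Y \in \{-1,+1\}$ and $\chi_{\mathcal{S}}(\bfX) = \prod_{j \in \mathcal{S}} X_j \in \{-1,+1\}$, the product $Y\chi_{\mathcal{S}_j}(\bfX)$ is a bounded Bernoulli-type random variable taking values in $\{-1,+1\}$. So for each fixed $j$, the scaled summands
\[
W_{i,j} := \frac{1}{2^d}\, y(i)\,\chi_{\mathcal{S}_j}(\bfx(i))
\]
are i.i.d.\ over $i$, lie in $\{-2^{-d}, +2^{-d}\}$, have expectation exactly $a_{\mathcal{S}_j}$ by \eqref{eq:Y Fourier}, and their empirical average is precisely $\hat{a}_{\mathcal{S}_j}$ by \eqref{eq:Y Fourier empirical}.

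First I would apply Hoeffding's inequality to each fixed $j$. Since the range of $W_{i,j}$ is $2/2^d$, Hoeffding gives, for any $t > 0$,
\[
\PP\bigl(|\hat{a}_{\mathcal{S}_j} - a_{\mathcal{S}_j}| \geq t\bigr) \;\leq\; 2\exp\!\left(-\frac{2n t^2}{(2/2^d)^2}\right) \;=\; 2\exp\!\left(-2n\,4^{d-1}\, t^2 \cdot 2\right),
\]
or equivalently, writing $t = \frac{1}{2^d}\,s$, we get $\PP(2^d|\hat{a}_{\mathcal{S}_j} - a_{\mathcal{S}_j}| \geq s) \leq 2\exp(-n s^2/2)$.

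Next I would take a union bound over $j=1,\dots,m$:
\[
\PP\Bigl(\sup_{1\leq j\leq m} 2^d|\hat{a}_{\mathcal{S}_j} - a_{\mathcal{S}_j}| \geq s\Bigr) \;\leq\; 2m\exp(-n s^2/2).
\]
Setting this failure probability equal to $\delta$ and solving for $s$ yields $s = \sqrt{\frac{2}{n}\log\frac{2m}{\delta}}$, which, after dividing by $2^d$, gives the desired uniform bound on $|\hat{a}_{\mathcal{S}_j} - a_{\mathcal{S}_j}|$ matching the form stated in the lemma (up to the universal constant in the square-root factor).

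There is really no main obstacle here: the argument is routine once one sees that $Y\chi_{\mathcal{S}}(\bfX)$ is a $\{\pm 1\}$-valued random variable whose mean is $2^d a_{\mathcal{S}}$ and whose empirical mean is $2^d \hat{a}_{\mathcal{S}}$. The only thing worth emphasizing is that no assumption on the marginal $D_X$ is used anywhere --- boundedness alone drives the concentration, which is precisely why this ``agnostic Fourier'' estimator works without any distributional restriction, in contrast to the classical low-degree algorithm.
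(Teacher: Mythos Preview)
Your proposal is correct and follows essentially the same route as the paper: observe that $Y\chi_{\mathcal{S}}(\bfX)\in\{-1,+1\}$ has mean $2^d a_{\mathcal{S}}$, apply a bounded-differences concentration inequality (the paper phrases it as McDiarmid, you as Hoeffding---identical here for an i.i.d.\ average of $\pm1$ variables), and then union-bound over the $m$ subsets. Your caveat about the constant is also well placed: solving $2m\,e^{-n\epsilon^2/2}=\delta$ gives $\epsilon=\sqrt{\tfrac{2}{n}\log\tfrac{2m}{\delta}}$, so the $\tfrac{1}{2n}$ in the lemma's display appears to be a typo for $\tfrac{2}{n}$ in both your argument and the paper's own derivation.
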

\begin{proof}
Observe  that for any $\mathcal{S}\subseteq [d]$
\begin{align*}
\EE_{\mathcal{S}_n\sim D^n}[\ahatS] &= \frac{1}{2^d} \EE[Y(1) \chi_{\mathcal{S}}(\bfX(1))] \\
 &=\frac{1}{2^d} \sum_{\bfx, y} D(\bfx, y)  y~\chi_{\mathcal{S}}(\bfx)=\aS.
\end{align*}
By taking the factor $\frac{1}{2^d}$ in the definition of $\aS$ and $\ahatS$, we have that
\begin{align*}
|\ahatS -\aS| = \frac{1}{2^d}\Big| \frac{1}{n}\sum_{i=1}^n  y(i) \chi_{\mathcal{S}}(\bfx(i)) -\EE[Y\pS(X)\Big|. 
\end{align*}
We apply McDiarmid inequality to bound the right-hand side term. It is not difficult to check that 
\begin{align*}
\prob{ \Big| \frac{1}{n}\sum_{i=1}^n  y(i) \chi_{\mathcal{S}}(\bfx(i)) -\EE[Y\pS(X)\Big| \geq \epsilon} \leq 2e^{-\frac{n\epsilon^2}{2}}.
\end{align*}
Therefore, by considering the factor $\frac{1}{2^d}$,  from the union bound, and by  equating the right-hand side to $\delta$, we establish the lemma. 
\end{proof}
 With this approach, our Fourier algorithm (See Algorithm \ref{alg:fourier}) performs a polynomial regression in the Fourier domain by estimating the Fourier coefficients of the label from the training samples. In the following theorem, we present a PAC bound for learning $k$-juntas using this approach.

  \begin{algorithm}[t!]
\caption{Stochastic Fourier}
\label{alg:fourier}
\DontPrintSemicolon
\KwIn{Training samples $\mathcal{S}_n=\{(\bfx(i), y(i))\}_{i=1}^n$, degree parameter $k$.}
\KwOut{Predictor $\predict$}
\SetKwProg{Fn}{}{:}{} 
\SetKwFunction{main}{LearningAlgorithm}

For each $\mathcal{S}\subseteq [d]$ with at most $k$ elements 
 compute the empirical Fourier coefficients as 
$ \ahatS = \frac{1}{2^d}\frac{1}{n}\sum_{i=1}^n y(i) \prod_{j\in \mathcal{S}}x_j(i)$.\;
For each $\mathcal{J}\subseteq [d]$ with $k$ elements 
 construct the function $\festJ(\bfx) =\sum_{\mathcal{S}\subseteq \mathcal{J} }    \ahatS \pS(\bfx).$\; 
Find $\Jalg$ with the minimum empirical loss of $\sign[\festJ]$. \;
\KwRet  $\predict \equiv\sign[\fJalg]$.\;
\end{algorithm}

\begin{theorem}\label{thm:Fourier PAC}
The Fourier algorithm agnostically learns $k$-juntas for $k\leq d/2$ and with error less than
\begin{align*}
\Pek + O\Big(\sqrt{\frac{  2^k}{n }\log \frac{d^k}{(k-1)!\delta}}\Big),
\end{align*}
with probability at least $(1-\delta)$, where $n$ is the number of samples. Moreover, the resulted computational complexity is $O(\frac{nkd^k}{(k-1)!})$.  
\end{theorem}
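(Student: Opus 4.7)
The plan combines three ingredients: uniform concentration of the empirical Fourier coefficients, an algebraic identity that recognizes $\fJ$ as a scaled conditional expectation, and Lemma~\ref{lem:norm2 sign} to transfer $\Ltwo$ closeness into 0-1 loss closeness. I would begin by applying Lemma~\ref{lem:Fourier estimation err} to the collection of subsets $\mathcal{S}\subseteq[d]$ with $|\mathcal{S}|\leq k$. The assumption $k\leq d/2$ bounds the number of such subsets by $m := d^k/(k-1)!$ up to a constant factor, and the lemma then yields, with probability at least $1-\delta$,
\[
\sup_{|\mathcal{S}|\leq k}|\ahatS - \aS|\;\leq\;\eta\;:=\;\frac{1}{2^d}\sqrt{\frac{1}{2n}\log\frac{2m}{\delta}}.
\]

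The next step is algebraic. Using the identity $\sum_{\mathcal{S}\subseteq\mathcal{J}}\pS(\bfx)\,\pS(\bfx') = 2^k\,\11\{\bfx_{\mathcal{J}} = \bfx'_{\mathcal{J}}\}$ (which follows from $\prod_{j\in\mathcal{J}}(1+x_jx'_j)=2^k\11\{\bfx_{\mathcal{J}}=\bfx'_{\mathcal{J}}\}$ on $\pmm^k$), a direct calculation gives
\[
\fJ(\bfx)\;=\;\sum_{\mathcal{S}\subseteq\mathcal{J}}\aS\,\pS(\bfx)\;=\;\frac{1}{2^{d-k}}\,\PP\{\bfX_{\mathcal{J}}=\bfx_{\mathcal{J}}\}\;\EE[Y\mid\bfX_{\mathcal{J}}=\bfx_{\mathcal{J}}].
\]
Hence on the support of $\bfX_{\mathcal{J}}$, $\sign[\fJ(\bfx)]$ coincides with $\sign[\EE[Y\mid\bfX_{\mathcal{J}}=\bfx_{\mathcal{J}}]]$, i.e.\ the Bayes-optimal classifier among all functions of $\bfX_{\mathcal{J}}$. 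Setting $\mathcal{J}=\mathcal{J}^*$ to be the relevant coordinates of an optimal $k$-junta $f^*$, Bayes optimality forces $\PP\{Y\neq\sign[f^{\mathcal{J}^*}(\bfX)]\}\leq\Pek$.

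The third step transfers this population guarantee to the empirical predictor $\hat{f}^{\mathcal{J}^*}$ produced by the algorithm. The coefficient-wise bound $|\ahatS-\aS|\leq\eta$ controls the $\Ltwo(D_X)$ distance $\|\hat{f}^{\mathcal{J}^*}-f^{\mathcal{J}^*}\|_{2,D_X}$ by $\sqrt{2^k}\,\eta$ (times the $1/2^{d-k}$ natural scale of $\fJ$), and Lemma~\ref{lem:norm2 sign} converts this closeness into a 0-1 loss gap of order $\sqrt{2^k\log(m/\delta)/n}$. Since the algorithm then selects $\Jalg$ by minimizing empirical 0-1 loss, a Hoeffding plus union-bound argument over the $\binom{d}{k}$ candidate predictors (each a Boolean function of $k$ coordinates) adds a generalization term of order $\sqrt{(2^k+k\log d)/n}$, which is absorbed into the preceding bound. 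Combining the three steps gives the claimed error $\Pek + O\big(\sqrt{\frac{2^k}{n}\log\frac{d^k}{(k-1)!\delta}}\big)$. The runtime is dominated by forming the $O(d^k/(k-1)!)$ Fourier coefficients, each requiring $O(nk)$ operations, totalling $O(\frac{nk\,d^k}{(k-1)!})$.

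The main obstacle will be Step 3: the discontinuity of $\sign$ means a naive pointwise $\|\cdot\|_\infty$ bound of order $2^k\eta$ would cost an extra factor of $\sqrt{2^k}$ relative to the target. Lemma~\ref{lem:norm2 sign} is tailored precisely to avoid this by providing an $\Ltwo$-type bound rather than an $\Linf$ one, and obtaining the right $\sqrt{2^k}$ scaling requires careful bookkeeping of the $1/2^d$ normalization built into the stochastic Fourier coefficients $\aS$.
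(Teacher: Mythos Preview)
Your Step~2 identity is correct and insightful, but Step~3 has a genuine gap: Lemma~\ref{lem:norm2 sign} cannot be invoked as you propose. That lemma bounds $\PP\{Y\neq\sign[h(Z)]\}$ in terms of $\|Y_{MMSE}-h\|_{2,D}$, where $Y_{MMSE}=\EE[Y\mid Z]$ under the \emph{true} distribution $D$. Your own Step~2 shows that $f^{\mathcal{J}^*}(\bfx)=c(\bfx_{\mathcal{J}^*})\,\EE[Y\mid\bfX_{\mathcal{J}^*}=\bfx_{\mathcal{J}^*}]$ with $c(\bfx_{\mathcal{J}^*})=2^{-(d-k)}\PP\{\bfX_{\mathcal{J}^*}=\bfx_{\mathcal{J}^*}\}$ a \emph{non-constant} factor; hence $f^{\mathcal{J}^*}\neq Y_{MMSE}$, and closeness of $\hat{f}^{\mathcal{J}^*}$ to $f^{\mathcal{J}^*}$ does not imply closeness to $Y_{MMSE}$ (indeed $\|f^{\mathcal{J}^*}-Y_{MMSE}\|_{2,D}$ is typically order one). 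A second problem: the parities $\chi_{\mathcal{S}}$ are not orthogonal under $D_X$, so the coefficient bound $|\ahatS-\aS|\leq\eta$ yields $\|\hat{f}^{\mathcal{J}^*}-f^{\mathcal{J}^*}\|_{2,\unif}\leq\sqrt{2^k}\,\eta$ via Parseval only for the \emph{uniform} norm, not for $\|\cdot\|_{2,D_X}$ as you claim.

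The paper addresses exactly this mismatch by proving a separate sign-stability result, Lemma~\ref{lem:Fourier framework}, whose reference function is $\fJ$ itself (not $Y_{MMSE}$) and whose norms are taken under the uniform distribution. Its proof reworks the inner-product identity of Lemma~\ref{lem:Fourier and loss} to show $\Loss(\sign[h_{\mathcal{J}}])\leq\tfrac{1}{2}(1-\|\fJ\|_{1,\unif})+U(\|\fJ-h_{\mathcal{J}}\|_{2,\unif})$, after which Parseval applies directly to the coefficient differences and Corollary~\ref{cor:popt Fourier} identifies the leading term with $\Pek$. Your algebraic identity in Step~2 is in fact one route to establishing Corollary~\ref{cor:popt Fourier}, but to complete the argument you need the analogue of Lemma~\ref{lem:norm2 sign} in the uniform-Fourier setting, which is precisely Lemma~\ref{lem:Fourier framework}.
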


In the next section, we discuss our main ideas. The proof of this theorem is given in Section \ref{subse:proof:thm:Fourier PAC}.

\section{Main Technical Results}\label{sec:analysis}
The main results of this paper rely on a fundamental connection between square loss and the 0-1 loss presented as Corollary \ref{cor:kjunta_nonbinary} and \ref{lem:norm2 sign} in Section \ref{sub:technical analysis}. In this section, we present this connection and describe the steps in proving Theorem \ref{thm:L2 PAC} and \ref{thm:Fourier PAC}.

\subsection{A Vector Space Representation}\label{sub:HilbertSP}
We introduce a vector representation incorporating the feature-label  distribution. Such representation is a proxy to use powerful algebraic tools developed for vector spaces.  In what follows, we describe this representation. 

Let $\mathcal{X}$ denote the input set, $\mathcal{Y}=\pmm$ be the label set, and $D$ be the underlying  distribution on  $\mathcal{X}\times \mathcal{Y}$. 
Consider the vector space of all functions  $h:\mathcal{X}\times \mathcal{Y} \mapsto \RR$ for which $\EE_D[h(\bfX,Y)^2]$ is finite\footnote{A zero function in this space is a function  that maps $(\bfx, y) \mapsto 0$ for all $\bfx, y$ except a  zero-probability subset.}. Naturally, the inner product between two functions $h_1$, and $h_2$ is defined as $$\<h_1, h_2\>_D\deq \EE_D[h_1(\bfX,Y) h_2(\bfX, Y)].$$

With this formulation, the true labeling is simply the function $(\bfx, y)\mapsto y$. Note that this complies with the agnostic setting, where the label is not necessarily a function of $\bfx$. In addition,  a predictor $g$ in the learning model is viewed as the mapping $(\bfx, y)\mapsto g(\bfx)$.  
Since $\mathcal{Y}=\pmm$, then the 0-1 loss of any predictor $g$ can be written as 
\begin{align}\label{eq:error and inner prod}
\Loss(g) &= \frac{1}{2}-\frac{1}{2}\<Y,g\>_D = \frac{1}{4}\norm{Y-g}_{2,D}^2,
\end{align}
where, with slight abuse of notation, $Y$ and $g$ are understood as the mappings $(\bfx,y)\mapsto y$ and $(\bfx,y)\mapsto g(\bfx)$, respectively. This first equality in \eqref{eq:error and inner prod} is because of the identity $\11\set{a\neq b}=\frac{1}{2}(1- ab)$ for any $a,b\in \pmm$. The second equality is from the definition of 2-norm and the fact that $\norm{Y}_{2,D}=\norm{g}_{2, D}=1$.

One benefit of this representation is that the theoretical results under the known distribution $D$ can be easily translated to the agnostic setting. This is easily done  by replacing $D$ with the empirical distribution $\Demp$ that is uniform on the training set and zero outside of it.  For instance, the empirical loss of  $g$ immediately satisfies the same type of relationship as in \eqref{eq:error and inner prod}:
\begin{equation*}
 \frac{1}{n}\sum_i \11\set{y_i\neq g(\bfx_i)} 	=\frac{1}{2}-\frac{1}{2}\<Y, g\>_{\Demp}=\frac{1}{4}\norm{Y-g}_{2,\Demp}^2.
\end{equation*}

\subsection{PAC and MMSE }\label{sub:technical analysis}
In what follows, we derive bounds on the expected and empirical loss and prove the main theorems. 
The main ingredient in the proof of the main results (Theorem \ref{thm:L2 PAC} and \ref{thm:Fourier PAC})  is a connection between the \ac{MMSE} and the PAC learning loss.

Consider a general problem in which $Z$ is the observations and the goal is to predict $Y$. Here $Z$ takes values from a generic set $\mathcal{Z}$ and $Y$ from $\pmm$. Let $Y_{MMSE}$ be the \ac{MMSE} of $Y$ given $Z$. It is known that $Y_{MMSE} = \EE[Y|Z]$. In the following lemmas, we establish the connection between MMSE and PAC.
 The proofs are provided in Appendix \ref{proof:lem:MMSE and 01} and \ref{app:proof norm2 sign}. 
\begin{lemma}\label{lem:MMSE and 01}
Suppose $(Y, Z)\sim D$ is a pair of random variables, where $Y$ takes values from $\pmm$ and $Z$ from some set $\mathcal{Z}$. Suppose $g:\mathcal{Z}\rightarrow \pmm$ is any predictor of $Y$ from $Z$. Then, 
\begin{align*}
\prob{Y\neq g(Z)} = \frac{1}{2}-\frac{1}{2} \<{Y}_{MMSE}, g\>.
\end{align*}
Moreover, let $opt_Z$ be the minimum 0-1 loss among all predictors of $Y$ given $Z$. Then,
\begin{equation}\label{eq:minimum_mismatch_prob general}
opt_Z=\frac{1}{2}-\frac{1}{2}\EE\Big[\big|\EE[Y|Z]\big| \Big].
\end{equation}
Lastly, $g^*\equiv\sign[{Y}_{MMSE}]$ is the optimal predictor.
\end{lemma}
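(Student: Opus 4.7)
The first identity is just a rewriting of the 0-1 loss as an inner product, combined with the tower property of conditional expectation. Since $Y, g(Z)\in \pmm$, the basic binary identity $\mathbf{1}\{a\neq b\} = \tfrac{1}{2}(1-ab)$ gives
\begin{equation*}
\PP\{Y\neq g(Z)\} = \EE[\mathbf{1}\{Y\neq g(Z)\}] = \tfrac{1}{2} - \tfrac{1}{2}\EE[Y\,g(Z)].
\end{equation*}
Since $g(Z)$ is $Z$-measurable, I would apply the tower rule to pull $Y$ inside a conditional expectation: $\EE[Y g(Z)] = \EE[\EE[Y\mid Z]\, g(Z)] = \EE[Y_{MMSE}\, g(Z)]$. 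Interpreting the last expectation as the inner product $\langle Y_{MMSE}, g\rangle$ (as introduced in Section \ref{sub:HilbertSP}) delivers the first displayed equation.

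For the optimal predictor, the plan is a pointwise minimization argument in $z$. From the formula just derived, minimizing the 0-1 risk over $g:\mathcal{Z}\to\pmm$ is equivalent to maximizing $\EE[Y_{MMSE}(Z)\,g(Z)]$, which splits as $\EE\bigl[Y_{MMSE}(Z)\,g(Z)\bigr]$ pointwise in $Z$. For each fixed value $Z=z$, the scalar $g(z)\in\pmm$ that maximizes $Y_{MMSE}(z)\,g(z)$ is clearly $g^*(z) = \sign[Y_{MMSE}(z)]$ (with ties broken arbitrarily on the zero set of $Y_{MMSE}$). This identifies the Bayes predictor and matches the classical fact that the Bayes rule is the sign of the regression function when labels are $\pm 1$.

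Plugging $g^*$ back into the first identity yields
\begin{equation*}
opt_Z = \tfrac{1}{2} - \tfrac{1}{2}\EE\bigl[Y_{MMSE}(Z)\sign[Y_{MMSE}(Z)]\bigr] = \tfrac{1}{2} - \tfrac{1}{2}\EE\bigl[|\EE[Y\mid Z]|\bigr],
\end{equation*}
using $t\,\sign(t)=|t|$. This gives the second displayed equation and completes the three claims.

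\textbf{Expected obstacle.} There isn't really a hard step here; the only subtlety is being careful about the zero set of $Y_{MMSE}$ when defining the sign (any choice in $\pmm$ works since the contribution to the expectation is zero), and confirming that the candidate $g^*$ is indeed measurable, which follows because $Y_{MMSE}=\EE[Y\mid Z]$ is a measurable function of $Z$ by construction. The rest is bookkeeping that directly connects PAC learning to the MMSE estimator.
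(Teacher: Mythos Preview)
Your proposal is correct and follows essentially the same route as the paper: the identity $\mathbf{1}\{a\neq b\}=\tfrac{1}{2}(1-ab)$ plus the tower property gives the first equation, and the optimal predictor is obtained by maximizing $\EE[Y_{MMSE}(Z)g(Z)]$ with the identity $t\,\sign(t)=|t|$ yielding $opt_Z$. The only cosmetic difference is that the paper phrases the optimality step as a lower bound $\langle Y_{MMSE},g\rangle\le\langle|Y_{MMSE}|,|g|\rangle=\|Y_{MMSE}\|_{1,D}$ followed by an achievability check, whereas you do a direct pointwise maximization; these are the same argument, and your extra remarks on the zero set of $Y_{MMSE}$ and measurability are fine additions.
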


\begin{lemma}\label{lem:norm2 sign}
Let $\mathcal{Z}$ be any set and $h:\mathcal{Z}\rightarrow \RR$ be any bounded function. Suppose $(Y, Z)\sim D$ be a pair of random variables, where $Y$ take values from $\pmm$ and $Z$ from $\mathcal{Z}$. Then,  
\begin{align*}
\prob{Y \neq \sign[h(Z)]}&\leq opt_Z+ U\big(\EE\Big[\big({Y}_{MMSE}-h(Z)\big)^2\Big]\big),
\end{align*}
where $U$ is a polynomial defined as $U(x)=x^3+\frac{3}{2}x^2+\frac{3}{2}x$.
\end{lemma}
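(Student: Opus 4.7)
The plan is to reduce the excess 0-1 loss of $\sign[h]$ over the Bayes-optimal predictor to a single scalar expectation via Lemma~\ref{lem:MMSE and 01}, and then to bound that scalar in terms of $\EE[(Y_{MMSE}-h(Z))^{2}]$ using the boundedness $|Y_{MMSE}|\leq 1$. Applying Lemma~\ref{lem:MMSE and 01} to $g\equiv \sign[h(Z)]$ yields
\[\prob{Y \neq \sign[h(Z)]} = \tfrac{1}{2} - \tfrac{1}{2}\EE\!\left[Y_{MMSE}\,\sign[h(Z)]\right],\]
and subtracting $opt_Z = \tfrac{1}{2} - \tfrac{1}{2}\EE[|Y_{MMSE}|]$ together with the identity $Y_{MMSE}\,\sign[h] = |Y_{MMSE}|\bigl(\11\{\sign[Y_{MMSE}] = \sign[h]\} - \11\{\sign[Y_{MMSE}] \neq \sign[h]\}\bigr)$ gives the clean reduction
\[\prob{Y \neq \sign[h(Z)]} - opt_Z = \EE\!\left[|Y_{MMSE}|\,\11\{\sign[Y_{MMSE}] \neq \sign[h(Z)]\}\right].\]

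The next step is the geometric observation that whenever $Y_{MMSE}$ and $h$ disagree in sign they lie on opposite sides of zero, so $|Y_{MMSE}-h| = |Y_{MMSE}|+|h| \geq |Y_{MMSE}|$. Combined with $|Y_{MMSE}|\leq 1$, this yields the pointwise dominations $|Y_{MMSE}|\,\11\{\text{signs differ}\} \leq |Y_{MMSE}-h|$ and, more strongly, $|Y_{MMSE}|^{r}\,\11\{\text{signs differ}\} \leq (Y_{MMSE}-h)^{2}$ for every $r\geq 2$. These are the mechanism for converting sign disagreement into squared-deviation control, and they make it possible to replace the absolute-moment quantity on the right-hand side above by one controlled by the prescribed $L^{2}$ quantity.

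To obtain the specific polynomial form $U(x) = x^{3} + \tfrac{3}{2}x^{2} + \tfrac{3}{2}x$ rather than the looser $\sqrt{\EE[(Y_{MMSE}-h)^{2}]}$ bound that a naive Jensen step would give, I would combine the pointwise estimates above with Cauchy--Schwarz/H\"older and elementary moment exchanges that repeatedly invoke $|Y_{MMSE}|\leq 1$. The three terms of $U$ correspond to successive residuals in this iteration: the leading $\tfrac{3}{2}x$ is the principal second-moment contribution, while the $x^{2}$ and $x^{3}$ terms track the correction residuals that accumulate when boundedness is folded into the estimate. The main obstacle lies in this last step: the first two reductions are clean algebraic/geometric consequences of Lemma~\ref{lem:MMSE and 01} and the opposite-signs geometry, whereas producing the precise coefficients $1,\tfrac{3}{2},\tfrac{3}{2}$ in $U$ is a bookkeeping exercise requiring the right moment decomposition so that the constraint $|Y_{MMSE}|\leq 1$ is invoked at exactly the correct places; once that decomposition is chosen, the remaining estimates are standard.
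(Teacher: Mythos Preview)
Your first two paragraphs are correct and give a much cleaner route than the paper's. The identity
\[
\prob{Y\neq \sign[h(Z)]}-opt_Z=\EE\big[\,|Y_{MMSE}|\,\11\{\sign[Y_{MMSE}]\neq \sign[h(Z)]\}\,\big],
\]
together with the pointwise bound $|Y_{MMSE}|\leq |Y_{MMSE}-h|$ on the sign-disagreement event, already yields
\[
\prob{Y\neq\sign[h(Z)]}-opt_Z\leq \norm{Y_{MMSE}-h}_{1,D}\leq \norm{Y_{MMSE}-h}_{2,D}.
\]
By contrast, the paper rewrites $\langle f,\sign[h]\rangle$ via $\norm{f-\sign[h]}_{2,D}^2$ and then controls the latter through a long chain of Minkowski steps (splitting off $\norm{h-\sign[h]}_{2,D}$, expanding it as $1+\norm{h}_{2,D}^2-2\norm{h}_{1,D}$, bounding that again by Minkowski, and so on), arriving after considerable algebra at the excess risk bounded by $U(\norm{Y_{MMSE}-h}_{2,D})$. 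Since $U(t)\geq \tfrac32 t>t$ for $t>0$, your two-line bound is strictly sharper and already implies the inequality the paper actually proves and uses downstream.

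The gap is your third paragraph. You read the displayed statement literally, with $x=\EE[(Y_{MMSE}-h)^2]$ as the argument of $U$, and propose to sharpen your $\sqrt{x}$ bound to $U(x)$ via unspecified ``moment exchanges.'' That target is not merely hard but false: take $Z$ trivial, $Y_{MMSE}=\EE[Y]=\epsilon$ with $\epsilon=0.1$, and $h\equiv -\epsilon$; then the excess risk equals $\epsilon=0.1$ while $U(4\epsilon^2)=U(0.04)\approx 0.062<0.1$. What the paper's proof actually delivers is $U$ applied to $\norm{Y_{MMSE}-h}_{2,D}$, not to its square; the statement as printed is mis-typed. So the bookkeeping you defer cannot succeed as you describe it, and is in any case unnecessary---stop after your second paragraph and simply note that $\norm{Y_{MMSE}-h}_{2,D}\leq U(\norm{Y_{MMSE}-h}_{2,D})$.
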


\noindent\textbf{Connections to learning $k$-juntas:} Given the above results, we can derive bounds on the error in learning many classes such as $k$-juntas. Let $\mathcal{J}$ be a subset of $[d]$ with $k$ elements. Set $Z=X^{\mathcal{J}}$ as our observation variable.  Consider all polynomials  on the coordinates of $\mathcal{J}$ as the input variables.  The polynomial that minimizes the square loss is defined as the projection of $Y$ onto the subset $\mathcal{J}$. This polynomial is formally defined as 
\begin{align}\label{eq:Pi_YJ}
\PiJ := \argmin_{p\in \Pk} \norm{Y-p(X^{\mathcal{J}})}_{2,D}
\end{align}
where $\Lopt$ is the set of polynomials of degree at most $k$. Note that $\PiJ $ is the \ac{MMSE} of $Y$ from the observation $Z=X^{\mathcal{J}}$. Then we immediately get the following result from Lemma \ref{lem:MMSE and 01}.
\begin{corollary}\label{cor:kjunta_nonbinary}
Let $\Lopt$ be the minimum 0-1 among all the $k$-juntas for a fixed $k \leq d$. Then, 
\begin{equation}\label{eq:minimum_mismatch_prob}
\Pek=\frac{1}{2}-\frac{1}{2}\max_{\mathcal{J}\subseteq [d], \,  
\abs{\mathcal{J}} = k}\norm{\PiJ}_{1,D}.
\end{equation}
\end{corollary}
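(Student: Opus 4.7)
The plan is to view this corollary as a direct specialization of Lemma \ref{lem:MMSE and 01} to the junta setting, once we correctly identify the MMSE. First I would unpack the definition of a $k$-junta: any such function has the form $h(X^{\mathcal{J}})$ for some $\mathcal{J} \subseteq [d]$ with $\abs{\mathcal{J}} \leq k$ and some Boolean function $h:\pmm^{\mathcal{J}} \to \pmm$. By padding, we may as well restrict to $|\mathcal{J}| = k$, since enlarging $\mathcal{J}$ to any superset of size $k$ does not change the set of representable functions. Thus
\begin{equation*}
\Pek \;=\; \min_{\mathcal{J}\subseteq [d],\, |\mathcal{J}|=k} \; \min_{h:\pmm^{\mathcal{J}}\to \pmm} \PP\{Y \neq h(X^{\mathcal{J}})\}.
\end{equation*}

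Next, for a fixed $\mathcal{J}$, I would apply Lemma \ref{lem:MMSE and 01} with observation $Z = X^{\mathcal{J}}$. This immediately yields
\begin{equation*}
\min_{h:\pmm^{\mathcal{J}}\to \pmm} \PP\{Y \neq h(X^{\mathcal{J}})\} \;=\; \frac{1}{2} - \frac{1}{2}\,\EE\bigl[\bigl|\EE[Y \mid X^{\mathcal{J}}]\bigr|\bigr].
\end{equation*}
The key identification is then $\EE[Y \mid X^{\mathcal{J}}] = \PiJ(X^{\mathcal{J}})$ almost surely. This follows because $X^{\mathcal{J}}$ takes values in the finite set $\pmm^{\mathcal{J}}$ with $|\mathcal{J}|=k$, and every function $\pmm^{\mathcal{J}} \to \RR$ admits a unique multilinear polynomial representation of degree at most $k$ (the standard Fourier/Walsh expansion, as in Fact \ref{fact:Fourier}). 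Hence the class $\Pk$ restricted to variables in $\mathcal{J}$ is exactly the space of all real-valued functions of $X^{\mathcal{J}}$, and the $L_2(D)$-minimizer defining $\PiJ$ in \eqref{eq:Pi_YJ} coincides with the $L_2$ projection of $Y$ onto $\sigma(X^{\mathcal{J}})$, which is the conditional expectation. Consequently $\EE[|\EE[Y \mid X^{\mathcal{J}}]|] = \EE[|\PiJ(X^{\mathcal{J}})|] = \norm{\PiJ}_{1,D}$.

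Finally, I would push the minimum over $\mathcal{J}$ through the expression $\frac{1}{2} - \frac{1}{2}\norm{\PiJ}_{1,D}$, which is decreasing in $\norm{\PiJ}_{1,D}$, turning the minimization over $\mathcal{J}$ into a maximization and yielding \eqref{eq:minimum_mismatch_prob}. The argument is essentially assembly: the only substantive point to verify carefully is the identification of $\PiJ$ with the conditional expectation, so the main (minor) obstacle is ensuring that the degree-$k$ polynomial ansatz on $k$ variables is rich enough to represent arbitrary functions on $\pmm^{\mathcal{J}}$ — which is exactly the content of multilinear Fourier representation.
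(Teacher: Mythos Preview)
Your proposal is correct and matches the paper's approach exactly: the paper states just before the corollary that $\PiJ$ is the MMSE of $Y$ given $Z=X^{\mathcal{J}}$ and then says the result follows ``immediately'' from Lemma~\ref{lem:MMSE and 01}. You have simply spelled out the two implicit steps the paper leaves to the reader, namely the decomposition of the junta minimum over size-$k$ subsets $\mathcal{J}$ and the identification of $\PiJ$ with $\EE[Y\mid X^{\mathcal{J}}]$ via the multilinear representation of functions on $\pmm^{\mathcal{J}}$.
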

Based on these results, we are ready to prove Theorem \ref{thm:L2 PAC} on PAC learning of $k$-juntas using $\mathcal{L}_2$ regression. 
\subsection{Proof of Theorem \ref{thm:L2 PAC}}\label{subsec:proof:thm:L2 PAC}

For any $\mathcal{J}$, let $\hat{p}_{\mathcal{J}}$ be the output of the empirical polynomial regression, that is $\hat{p}_{\mathcal{J}}=\argmin_{p\in\Pk} \norm{Y-p_{\mathcal{J}}}_{2, \Demp}$, where $\Demp$ is the empirical distribution. Note that the selected predictor is of the form $\sign[\hat{p}_{\mathcal{J}}]$, as in Algorithm \ref{alg:L2}. As a result, from Corollary \ref{lem:norm2 sign} with $D$ replaced with $\Demp$ and $Z=\bfX^{\mathcal{J}}$, the empirical loss of $\sign[\hat{p}_{\mathcal{J}}]$ is bounded as 
$\Lossemp(\sign[\hat{p}_{\mathcal{J}}]) \leq  \frac{1}{2}-\frac{1}{2}\norm{\hat{p}_{\mathcal{J}}}_{1, \Demp},$
\addvb{where the $U(\cdot)$ term in Lemma \ref{lem:norm2 sign} is zero, as $\hat{p}_{\mathcal{J}}$ is the MMSE of $Y$ under $\hat{D}$. }
Next,  we minimize both sides over all $k$-element subsets $\mathcal{J}$. From  Corollary \ref{cor:kjunta_nonbinary}, with $D$ replaced by $\Demp$, the right-hand side of the above inequality minimized over $\mathcal{J}$ is the minimum empirical loss $\Pekemp$. This implies that
$ \min_{\mathcal{J}: |\mathcal{J}| = k} \Lossemp(\sign[\hat{p}_{\mathcal{J}}])= \Pekemp.$
Hence, we proved that the minimum empirical loss is achieved using the $\mathcal{L}_2$ polynomial regression. Naturally, the next step is to extend this result to the generalization loss. This part follows from the standard arguments in VC theory ( See Corollary 3.19 in \citep{Mohri2018}) and the fact that the VC dimension of the $k$-junta class is less than $2^k+O(k\log d)$. Particularly, given $\delta\in (0,1)$, with probability $(1-\delta)$, the generalization loss is less than
$\Pek + O\big(\sqrt{\frac{2^k+ k\log d}{n}\log \frac{n}{2^k+ k\log d}}\big)+ \sqrt{\frac{\log(1/\delta)}{2n}},$
where $n$ is the number of samples. With this inequality, the theorem is proved.


 \subsection{PAC Learning in Fourier Domain}
 Next, we analyze the Fourier algorithm and prove Theorem \ref{thm:Fourier PAC}. We study the PAC learning problem in the Fourier domain. For that, we start with the following lemma connecting the prediction loss to the Fourier coefficients.  
 \begin{lemma}\label{lem:Fourier and loss}
 Let $(\bfX, Y)\sim D$ where $D$ is a distribution on $\pmm^{d+1}$. Then the prediction loss of any $g(\bfx)$ equals to
 \begin{align*}
 \Loss(g) &= \frac{1}{2}-2^{d-1}\sum_{\mathcal{S}\subseteq [d]} \aS \gS,
 \end{align*}
 where $\gS$ is the (uniform) Fourier coefficient of $g$ corresponding to $\mathcal{S}$, as in Fact \ref{fact:Fourier}, and $\aS$ is the stochastic Fourier coefficient of $Y$ as in \eqref{eq:Y Fourier}.  
 \end{lemma}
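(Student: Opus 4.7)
The plan is to reduce $\Loss(g)$ to an expected correlation $\EE[Y g(\bfX)]$, then expand $g$ in the standard uniform Fourier basis, and finally recognize the resulting sum in terms of the stochastic Fourier coefficients $\aS$ of $Y$. This is largely a bookkeeping exercise; no analytic machinery is needed.

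First I would use the elementary identity $\11\{a\neq b\}=\frac{1}{2}(1-ab)$, valid for $a,b\in\pmm$, to write
\begin{align*}
\Loss(g)=\PP\{Y\neq g(\bfX)\}=\frac{1}{2}-\frac{1}{2}\EE[Y\,g(\bfX)].
\end{align*}
This is the same identity the paper already exploits in \eqref{eq:error and inner prod}; here the key point is simply that $Y$ and $g(\bfX)$ both lie in $\pmm$ regardless of the distribution $D$ on $\pmm^{d+1}$, so the identity applies under an arbitrary (agnostic) $D$.

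Next, since $g:\pmm^d\to\pmm$ is a bounded function on the Boolean cube, Fact \ref{fact:Fourier} yields the (uniform) Fourier expansion $g(\bfx)=\sum_{\mathcal{S}\subseteq [d]} \gS\,\chi_{\mathcal{S}}(\bfx)$ pointwise in $\bfx$. Substituting this into $\EE[Y g(\bfX)]$ and interchanging the finite sum with expectation gives
\begin{align*}
\EE[Y g(\bfX)]=\sum_{\mathcal{S}\subseteq [d]} \gS\,\EE[Y\,\chi_{\mathcal{S}}(\bfX)].
\end{align*}
By the definition \eqref{eq:Y Fourier} of the stochastic Fourier coefficient we have $\EE[Y\chi_{\mathcal{S}}(\bfX)]=2^d\,\aS$, and hence
\begin{align*}
\EE[Y g(\bfX)]=2^d\sum_{\mathcal{S}\subseteq [d]} \aS\,\gS.
\end{align*}

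Plugging this back into the first display yields $\Loss(g)=\frac{1}{2}-2^{d-1}\sum_{\mathcal{S}} \aS \gS$, as claimed. There is no real obstacle here; the only subtle point worth flagging is that the Fourier expansion of $g$ is a purely algebraic (distribution-free) pointwise identity on $\pmm^d$, so it can be substituted into the expectation under the arbitrary agnostic distribution $D$, and the distribution enters only through the definition of $\aS$. This is precisely the mechanism that lets the ``uniform'' Boolean Fourier basis be repurposed for the agnostic setting, and it is the conceptual payoff that subsequent results (e.g., Theorem \ref{thm:Fourier PAC}) will exploit.
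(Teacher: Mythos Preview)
Your proof is correct and follows essentially the same approach as the paper: reduce $\Loss(g)$ to $\frac{1}{2}-\frac{1}{2}\EE[Yg(\bfX)]$ via \eqref{eq:error and inner prod}, expand $g$ in the uniform Fourier basis (Fact \ref{fact:Fourier}), interchange the finite sum with the expectation, and identify $\EE[Y\chi_{\mathcal{S}}(\bfX)]=2^d\aS$ from \eqref{eq:Y Fourier}. The only cosmetic difference is that the paper writes the expectation explicitly as a sum over $(\bfx,y)$ weighted by $D(\bfx,y)$, whereas you keep the expectation notation throughout.
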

\begin{proof}
Recall from \eqref{eq:error and inner prod} that $\Loss(g)= \frac{1}{2}-\frac{1}{2}\EE[Yg(\bfX)]$.  Then, from the definition of $\aS$ in \eqref{eq:Y Fourier}, we have that
 \begin{align*}
\EE[Yg(\bfX)] &= \sum_{y, \bfx} D(x,y) y g(\bfx)\\
  &=\sum_{y, \bfx} y D(x,y)  \big(\sum_{\mathcal{S}} \gS \pS(\bfx)\big)\\
    &=\sum_{\mathcal{S}}\gS \sum_{y, \bfx} y D(x,y)  \pS(\bfx)\\
    &=\sum_{\mathcal{S}\subseteq [d]}  \gS (2^d\aS),
 \end{align*} 
 as needed.
 \end{proof}
 Interestingly, with this lemma, the prediction loss under any distribution $D$ can be written in terms of $\gS$'s which are the Fourier coefficient of $g$ under the uniform distribution. We use this intuition and prove the following lemma in Appendix \ref{proof:lem:Fourier framework}.

 \begin{lemma}\label{lem:Fourier framework}
  Let $(\bfX, Y)\sim D$ where $D$ is a distribution on $\pmm^{d+1}$. Given any subset coordinate $\mathcal{J}$, let  $\fJ(\bfx) = \sum_{\mathcal{S}\subseteq \mathcal{J}} \aS \pS(\bfx)$, with $\aS$'s being the stochastic Fourier coefficients of $Y$. Let $\hJ$ be any real-valued function on coordinate $\mathcal{J}$, then the prediction loss of $g\equiv \sign[\hJ]$ is bounded as
 \begin{align*}
\loss(g) &\leq \frac{1}{2}(1-\norm{\fJ}_{1, \unif}) + U(\norm{\fJ-\hJ}_{2, \unif}),
 \end{align*}
  where the norm is computed  on the uniform distribution and $U(x)=x^3+\frac{3}{2}x^2+\frac{3}{2}x$.
  \end{lemma}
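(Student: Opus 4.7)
The plan is to mimic the sign-vs-$L_2$ manipulation behind Lemma~\ref{lem:norm2 sign}, but carried out in the Fourier domain with the help of Lemma~\ref{lem:Fourier and loss}. First I would apply Lemma~\ref{lem:Fourier and loss}: because $g = \sign[\hJ]$ depends only on the coordinates in $\mathcal{J}$, its uniform Fourier spectrum is supported on $\{\mathcal{S} : \mathcal{S}\subseteq \mathcal{J}\}$, so $\gS=0$ whenever $\mathcal{S}\not\subseteq \mathcal{J}$, and the loss collapses to $\tfrac{1}{2} - 2^{d-1}\sum_{\mathcal{S}\subseteq\mathcal{J}}\aS\gS$. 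By Parseval under the uniform measure applied to $\fJ$ (whose uniform Fourier coefficients are exactly $\aS$ on $\mathcal{S}\subseteq\mathcal{J}$ and zero outside) and $g$, this sum is recognized as an $L_2$-pairing under the uniform distribution, leaving the compact identity $\loss(g) = \tfrac{1}{2}\bigl(1 - \EE_{\unif}[\fJ\, g]\bigr)$.

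Next I would use the elementary decomposition $\fJ\cdot\sign[\hJ] = |\fJ| - 2|\fJ|\,\mathbb{1}\{\sign(\fJ)\neq \sign(\hJ)\}$, which gives
\begin{equation*}
\EE_{\unif}[\fJ\sign[\hJ]] = \|\fJ\|_{1,\unif} - 2\,\EE_{\unif}\!\left[|\fJ|\,\mathbb{1}\{\sign(\fJ)\neq \sign(\hJ)\}\right].
\end{equation*}
On the disagreement event one has the pointwise estimate $|\fJ|\leq|\fJ-\hJ|$, since either the signs are nonzero and opposite (so $|\fJ-\hJ|=|\fJ|+|\hJ|\geq|\fJ|$) or one of the two values vanishes. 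A single application of Jensen (equivalently, Cauchy--Schwarz) then yields
\begin{equation*}
\EE_{\unif}\!\left[|\fJ|\,\mathbb{1}\{\sign(\fJ)\neq\sign(\hJ)\}\right] \leq \|\fJ-\hJ\|_{1,\unif} \leq \|\fJ-\hJ\|_{2,\unif},
\end{equation*}
and combining these estimates gives $\loss(g) \leq \tfrac{1}{2}(1-\|\fJ\|_{1,\unif}) + \|\fJ-\hJ\|_{2,\unif}$. Since $U(x) = x^3 + \tfrac{3}{2}x^2 + \tfrac{3}{2}x \geq x$ for every $x\geq 0$, the claimed inequality follows; keeping the error term written as $U(\|\fJ-\hJ\|_{2,\unif})$ preserves the formal compatibility with Lemma~\ref{lem:norm2 sign} that is exploited later in the proof of Theorem~\ref{thm:Fourier PAC}.

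The main obstacle is the bookkeeping in the first step: the $\tfrac{1}{2^d}$ prefactor built into the definition of $\aS$ in \eqref{eq:Y Fourier} must be reconciled with the $2^{d-1}$ factor appearing in Lemma~\ref{lem:Fourier and loss} so that the clean identification $\loss(g) = \tfrac{1}{2}\bigl(1 - \EE_{\unif}[\fJ g]\bigr)$ emerges without stray powers of $2^d$; this is ultimately the content of the Parseval step and just requires tracking the normalization carefully. Once that rescaling is settled, the sign decomposition and the elementary norm inequality are entirely routine, and one does not actually need the quadratic or cubic pieces of $U$ here\,---\,they appear only to keep the statement in the same format as Lemma~\ref{lem:norm2 sign}.
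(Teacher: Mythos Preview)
Your route is genuinely different from the paper's. The paper reruns the entire Minkowski chain from the proof of Lemma~\ref{lem:norm2 sign} (triangle inequality on $\norm{\fJ-\sign[\hJ]}_{2,\unif}$, bounding the three cross terms, etc.), and that is precisely why the cubic $U$ appears in its derivation. Your sign decomposition $\fJ\cdot\sign[\hJ]=|\fJ|-2|\fJ|\,\mathbb{1}\{\sign(\fJ)\neq\sign(\hJ)\}$ together with the pointwise bound $|\fJ|\le|\fJ-\hJ|$ on the disagreement event is much shorter and gives a linear remainder directly; the $U$ is indeed superfluous for this lemma.

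There is, however, a real gap in the step you flag as ``just bookkeeping.'' The stray power of $2^d$ does \emph{not} cancel. Parseval gives $\sum_{\mathcal{S}\subseteq\mathcal{J}}\aS\gS=\langle\fJ,g\rangle_{\unif}$ exactly as you say, because the $1/2^d$ in \eqref{eq:Y Fourier} is already absorbed into the numbers $\aS$ and hence into $\fJ$. Substituting this into Lemma~\ref{lem:Fourier and loss} leaves
\[
\Loss(g)=\tfrac{1}{2}-2^{d-1}\langle\fJ,g\rangle_{\unif},
\]
not $\tfrac{1}{2}\bigl(1-\EE_{\unif}[\fJ g]\bigr)$; there is no second $2^d$ available to kill the $2^{d-1}$ in front. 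With the correct factor your argument yields
\[
\Loss(g)\le \tfrac{1}{2}-2^{d-1}\norm{\fJ}_{1,\unif}+2^d\norm{\fJ-\hJ}_{2,\unif}
=\tfrac{1}{2}\bigl(1-\norm{\fJ}_{1}\bigr)+2^d\norm{\fJ-\hJ}_{2,\unif},
\]
where $\norm{\cdot}_1$ is the unnormalized counting $1$-norm. This is in fact sharper than what the paper's own computation reaches, namely $\tfrac{1}{2}-\tfrac{1}{2}\norm{\fJ}_{1}+U\bigl(2^d\norm{\fJ-\hJ}_{2,\unif}\bigr)$, and you will notice that neither expression matches the lemma \emph{as stated}: the paper's statement writes $\norm{\fJ}_{1,\unif}$ and $U(\norm{\fJ-\hJ}_{2,\unif})$ without the $2^d$ scalings, which appears to be a normalization slip in the paper itself (the downstream uses in Corollary~\ref{cor:popt Fourier} and the proof of Theorem~\ref{thm:Fourier PAC} inherit the same issue). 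So your instinct that the factors ought to disappear is understandable, but it is the statement that is off, not the derivation.
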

  This lemma is different from Lemma \ref{lem:norm2 sign} in that $\fJ$ is not  the MMSE estimate of $Y$ as it is  defined based on the uniform Fourier expansion.
   However, it gives a different characterization of the optimal loss $\Pek$. 
  \begin{corollary}\label{cor:popt Fourier}
The optimal loss among $k$-juntas under any distribution $D$ satisfies the following equation 
   \begin{align*}
 \Pek = \frac{1}{2}-\frac{1}{2}\max_{\mathcal{J}\subseteq [d], \,  
\abs{\mathcal{J}} = k}\norm{\fJ}_{1,\unif}.
 \end{align*}
  \end{corollary}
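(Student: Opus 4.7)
My plan is to establish the corollary as two matching inequalities, invoking Lemma \ref{lem:Fourier framework} for the upper bound and Lemma \ref{lem:Fourier and loss} for the lower bound. The unifying theme is that $\sign[\fJ]$ is the optimal $k$-junta on the coordinates $\mathcal{J}$, and its loss depends in a clean affine way on $\norm{\fJ}_{1,\unif}$.

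For the upper bound, I would apply Lemma \ref{lem:Fourier framework} with the specific choice $\hJ = \fJ$. This makes the approximation error $\norm{\fJ - \hJ}_{2,\unif}$ vanish, so $U(0)=0$, and the lemma collapses to
\[
\loss(\sign[\fJ]) \leq \tfrac{1}{2}\bigl(1 - \norm{\fJ}_{1,\unif}\bigr).
\]
Because $\fJ = \sum_{\mathcal{S}\subseteq\mathcal{J}} a_{\mathcal{S}}\chi_{\mathcal{S}}$ depends only on the coordinates in $\mathcal{J}$, the predictor $\sign[\fJ]$ is a $k$-junta whenever $|\mathcal{J}|=k$. Minimizing the right-hand side over all such subsets yields
\[
\Pek \;\leq\; \tfrac{1}{2} - \tfrac{1}{2}\max_{\mathcal{J}\,:\,|\mathcal{J}|=k}\norm{\fJ}_{1,\unif}.
\]

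For the matching lower bound, I would work directly from Lemma \ref{lem:Fourier and loss}. Fix any $k$-junta $g$ and let $\mathcal{J}\subseteq[d]$ be a $k$-element set containing all of $g$'s relevant coordinates. Then the uniform Fourier coefficients $g_{\mathcal{S}}$ vanish for $\mathcal{S}\not\subseteq\mathcal{J}$, so Lemma \ref{lem:Fourier and loss} simplifies to
\[
\loss(g) \;=\; \tfrac{1}{2} - 2^{d-1}\sum_{\mathcal{S}\subseteq\mathcal{J}} a_{\mathcal{S}} g_{\mathcal{S}}.
\]
By Plancherel under the uniform measure, the remaining sum equals the inner product $\EE_{\unif}[\fJ(\bfX)\,g(\bfX)]$; and since $|g|\equiv 1$ pointwise, H\"older's inequality gives $|\EE_{\unif}[\fJ\,g]|\leq\EE_{\unif}[|\fJ|]$, with equality attained by $g=\sign[\fJ]$. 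After absorbing the $2^{d-1}$ factor into the normalization that the paper uses in its Fourier-framework definition of $\norm{\cdot}_{1,\unif}$, this becomes $\loss(g) \geq \tfrac{1}{2}\bigl(1-\norm{\fJ}_{1,\unif}\bigr)$. Minimizing over all admissible $\mathcal{J}$ (equivalently, enlarging the support of $g$ to any $k$-set) furnishes the matching lower bound.

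The main subtlety—more bookkeeping than genuine obstacle—is reconciling the $2^{d-1}$ prefactor produced by Lemma \ref{lem:Fourier and loss} with the clean $\tfrac{1}{2}$ appearing in the corollary, which depends on how $\norm{\cdot}_{1,\unif}$ is normalized in this Fourier setting. Once that convention is pinned down, the upper bound becomes a degenerate specialization of Lemma \ref{lem:Fourier framework} (zero approximation error), and the lower bound is the classical observation that $\sign[h]$ is the unique $\pmm$-valued maximizer of $\EE[h\cdot g]$, so that $\sign[\fJ]$ simultaneously achieves the bound and demonstrates its tightness.
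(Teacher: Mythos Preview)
Your approach is correct and matches what the paper intends: the corollary is stated without explicit proof, as an immediate consequence of Lemma~\ref{lem:Fourier framework} (the upper bound via the specialization $\hJ=\fJ$) together with the exact identity from Lemma~\ref{lem:Fourier and loss} (the lower bound via $|g|\equiv 1$ and H\"older). The $2^{d-1}$ normalization discrepancy you flag is a genuine notational inconsistency in the paper itself (compare the statement of Lemma~\ref{lem:Fourier framework} with the final displayed line of its proof in Appendix~\ref{proof:lem:Fourier framework}), not a gap in your reasoning.
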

  Based on these results, we prove Theorem \ref{thm:Fourier PAC} on the PAC learning of the Fourier algorithm.
 \subsection{Proof of Theorem \ref{thm:Fourier PAC} }\label{subse:proof:thm:Fourier PAC}
We prove the theorem by showing that $\predict$ in Algorithm \ref{alg:fourier} achieves $\Pek$ of $k$-juntas. Recall that $\predict\equiv\sign[\fJalg]$, where  $\fJalg$ is the constructed for the selected subset $\Jalg$. Thus, from Lemma \ref{lem:Fourier framework}, the prediction loss of $\predict$ is bounded as
\begin{align*}
\loss(\predict) &\leq \frac{1}{2}(1-\norm{f^{\Jalg}}_{1, \unif}) +U(\norm{f^{\Jalg}-\fJalg}_{2, \unif}).
\end{align*}
Next, we bound the second term on the right-hand side. Note that $\fJalg\equiv \sum_{\mathcal{S}\subseteq \Jalg} \ahatS \pS$. 
Parseval identity gives 
\begin{align}\label{eq:pstar-Piest}
 \norm{f^{\Jalg}- \fJalg }^2_{2,\unif} &= \sum_{\mathcal{S}\subseteq \Jalg} (\aS-\ahatS)^2.
 \end{align}
 Consider all $\mathcal{S}\subseteq [d]$ with at most $k$ elements. Let $K$ be the number of such subsets. Then, as $|\Jalg|=k$, using  Lemma \ref{lem:Fourier estimation err} the above summation is bounded as,  
 \begin{align*}
  \norm{\fJ- \festJ }^2_{2,\unif} \leq 2^{k} \sup_{\mathcal{S}: |\mathcal{S}|\leq k} (\aS-\ahatS)^2 \leq  {\frac{2^k}{2n }\log \frac{2K}{\delta}},
 \end{align*}
 where the second inequality holds with probability at least $(1-\delta)$.
As a result, the prediction loss satisfies 
\begin{align*}
\loss(\predict) \leq \frac{1}{2}(1-\norm{f^{\Jalg}}_{1,\unif})+ O\Big(\sqrt{\frac{2^k}{n }\log \frac{K}{\delta}}\Big),
\end{align*}
where we used the fact that $U(x)\leq 4x$ for $x\in [0,1]$.   Next, we minimize the right-hand side over the choice of $\Jalg$ by considering all $k$-element coordinates $\mathcal{J}$. Let $\mathcal{J}^*$ be the optimal set. Then, from Corollary \ref{cor:popt Fourier}, we obtain that 
\begin{align*}
\loss(\sign[\hat{f}^{\mathcal{J}^*}]) &\leq \Pek + O\Big(\sqrt{\frac{2^k}{n }\log \frac{K}{\delta}}\Big).
\end{align*}
Note that $\mathcal{J}^*$ is not necessarily the same as the algorithm's choice $\Jalg$.
However, as $\Jalg$ is the $k$-element coordinate that minimizes the empirical loss, then $\Lossemp(\sign[\fJalg]) \leq \Lossemp(\sign[\hat{f}^{\mathcal{J}^*}])$. Therefore, from McDiarmid's inequality with probability $(1-\delta)$  we obtain that
\begin{align*}
\Lossemp(\sign[\hat{f}^{\mathcal{J}^*}])\leq \Loss(\sign[\hat{f}^{\mathcal{J}^*}]) + \sqrt{\frac{k}{2n}\log\frac{2}{\delta}}, 
\end{align*}
where we used the fact that there are at most $2^k$ Boolean functions on coordinate $\mathcal{J}^*$. To sum up, we proved that 
\begin{align*}
\Lossemp(\sign[\fJalg]) \leq \Pek +  O\Big(\sqrt{\frac{2^k}{n }\log \frac{K}{\delta}}\Big) .
\end{align*}
Assuming that $k\leq d/2$, we bound $K$ as $$K \leq \sum_{\ell=0}^k {d \choose \ell}\leq 1+ k {d \choose k}  = 1+ \frac{d^{k}}{(k-1)!}.$$
The rest of the argument follows from VC theory for replacing $\hat{D}$ with $D$ in the left-hand side.

The computational complexity of Algorithm \ref{alg:fourier} is dominated by the procedure for estimating all the $K$ Fourier coefficients. Each estimation takes $O(nk)$. Hence, the overall computational complexity of the algorithm is $O(nkK)=O(nk\frac{d^k}{(k-1)!})$ as given in Table \ref{tab:PAC compare}. 

\subsection{PAC Learning with MMSE}
Lastly, we discuss a more general indication of our result about the PAC learnability of MMSE.

\begin{theorem}\label{thm:MMSE PAC}
Suppose $\mathcal{A}$ is an algorithm that outputs  $\sign[\hat{Y}_{MMSE}]$, where $\hat{Y}_{MMSE}$ is the empirical MMSE of $Y$ given the observation samples $\mathcal{S}_n$. Then, $\mathcal{A}$ agnostically PAC learns any concept class $\mathcal{C}$  containing $\sign[\hat{Y}_{MMSE}]$ with error up to 
\begin{align*}
opt_{\mathcal{C}}+ O\Big(\sqrt{\frac{VC}{n}\log(\frac{n}{\delta VC}}\Big),
\end{align*}
where $VC$ is the VC dimension of $\mathcal{C}$.
\end{theorem}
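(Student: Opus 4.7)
The plan is to reduce the statement to two ingredients already available: Lemma \ref{lem:MMSE and 01} applied to the empirical distribution, and a generic uniform-convergence bound over $\mathcal{C}$ driven by its VC dimension. Write $\hat{g} := \sign[\hat{Y}_{MMSE}]$ for the algorithm's output and $g^\star \in \argmin_{g\in\mathcal{C}}\Loss(g)$, so that $\Loss(g^\star)=opt_{\mathcal{C}}$.

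First, I would apply Lemma \ref{lem:MMSE and 01} with the underlying measure replaced by the empirical measure $\Demp$. The lemma characterizes $\sign[Y_{MMSE}]$ as the unrestricted Bayes-optimal predictor for the $0$-$1$ loss; re-reading its proof with expectations replaced by empirical averages shows that $\hat{g}$ minimizes $\Lossemp(\cdot)$ over \emph{all} predictors based on the chosen observation. Consequently, because $\hat{g}\in\mathcal{C}$ by hypothesis,
\begin{equation*}
\Lossemp(\hat{g}) \;=\; \min_{g}\Lossemp(g) \;\leq\; \min_{g\in\mathcal{C}}\Lossemp(g) \;\leq\; \Lossemp(g^\star).
\end{equation*}
Thus, even though the algorithm never explicitly searches inside $\mathcal{C}$, the containment assumption forces it to be an empirical risk minimizer over $\mathcal{C}$.

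Next, I would invoke the standard VC-dimension uniform-convergence bound (e.g., Corollary 3.19 in \citep{Mohri2018}): with probability at least $1-\delta$,
\begin{equation*}
\sup_{g\in\mathcal{C}} \bigl|\Loss(g)-\Lossemp(g)\bigr| \;\leq\; \epsilon_n \;:=\; O\!\Bigl(\sqrt{\tfrac{VC}{n}\log\tfrac{n}{\delta\, VC}}\Bigr).
\end{equation*}
Since $\hat{g},g^\star\in\mathcal{C}$, chaining this two-sided bound with the empirical inequality above yields
\begin{equation*}
\Loss(\hat{g}) \;\leq\; \Lossemp(\hat{g})+\epsilon_n \;\leq\; \Lossemp(g^\star)+\epsilon_n \;\leq\; \Loss(g^\star)+2\epsilon_n \;=\; opt_{\mathcal{C}} + 2\epsilon_n,
\end{equation*}
which is exactly the claimed bound (the constant $2$ is absorbed into the $O(\cdot)$).

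The only substantive point requiring care is the \emph{empirical} version of Lemma \ref{lem:MMSE and 01}: one must verify that $\hat{Y}_{MMSE}$, interpreted as the empirical conditional mean of $Y$ given the chosen observation, is indeed the minimizer of $\hat{\EE}[(Y-g(Z))^2]$ and that its sign minimizes $\hat{\PP}\{Y\neq g(Z)\}$. This is essentially a line-by-line repetition of the proof of Lemma \ref{lem:MMSE and 01} with $\EE$ replaced by $\hat{\EE}$, so I do not expect any real obstacle; the subtle issue of $\hat{Y}_{MMSE}$ being defined only on the sample support is harmless because $\Lossemp$ depends only on values at observed points, and ties in the sign can be broken arbitrarily without affecting the empirical loss.
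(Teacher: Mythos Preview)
Your proof is correct and follows essentially the same two-step route as the paper: show that $\sign[\hat{Y}_{MMSE}]$ achieves the minimum empirical $0$-$1$ loss, then invoke VC uniform convergence over $\mathcal{C}$. The only cosmetic difference is that the paper cites Lemma~\ref{lem:norm2 sign} (with the $U(\cdot)$ term vanishing since $h=\hat{Y}_{MMSE}$ is exactly the empirical MMSE), whereas you go straight to Lemma~\ref{lem:MMSE and 01}; your choice is in fact the more direct one, since Lemma~\ref{lem:norm2 sign} specialized to $h=Y_{MMSE}$ simply recovers the optimality statement of Lemma~\ref{lem:MMSE and 01}.
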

This result is a consequence of  Lemma \ref{lem:norm2 sign} applied to empirical loss followed by VC theory.

 \section{Learning Other Hypothesis Classes}\label{sec:other classes}
In this section, we study learning  more general concept classes using the vanilla $\mathcal{L}_2$ polynomial regression (see Algorithm \ref{alg:L2 poly}).  
An important concept class is the set of predictors that are approximated by fixed-degree polynomials as studied in \citep{Kalai2008,blais2010polynomial}. 

\begin{algorithm}[t]
\caption{Learning with $\Ltwo$-Polynomial Regression}
\label{alg:L2 poly}
\DontPrintSemicolon
\KwIn{Training samples $\mathcal{S}_n=\{(\bfx(i), y(i))\}_{i=1}^n$, degree parameter $k$.}
 \SetKwProg{Fn}{}{:}{} 
 \SetKwFunction{main}{LearningAlgorithm}

 Find a polynomial $\hat{p}$ of degree up to $k$ that minimizes  $\frac{1}{n}\sum_i \big(y(i)-p(\bfx(i))\big)^2$.\;
 Find $\theta\in [-1,1]$ such that the empirical error of $\sign[\hat{p}(\bfx)-\theta]$ is minimized.\;
 \KwRet   $\hat{g}\equiv \sign[\hat{p}-\theta]$.\;
\end{algorithm}

\noindent{\bf $(\epsilon,k)$-approximated concept class:}
Given $\epsilon\in [0,1]$, $k\in \NN$ and  any probability distribution $D_{\bfX}$ on $\mathcal{X}$, a concept class $\mathcal{C}$ of  functions $c:\RR^d\mapsto \pmm$ is $(\epsilon,k)$-approximated if
$$\sup_{c\in \mathcal{C}}\inf_{p\in \Pk} \EE\big[\big(c(\bfX)-p(\bfX)\big)^2\big]\leq \epsilon^2,$$
where $\Pk$ is the set of all polynomials of degree up to $k$.   

We prove in Appendix \ref{proof:thm:L2 polynomials} that the $\mathcal{L}_2$ polynomial regression learns  the approximated concept class with error up to $2\Popt+\epsilon$. This is an improvement compared to the best known bound $8\Popt$ in \citep{linial1993constant}. 
\begin{theorem}\label{thm:L2 polynomials}
Given $\epsilon>0$ and $k\in \NN$, the  degree $k$ $\Ltwo$ polynomial regression (Algorithm \ref{alg:L2 poly}) learns any $(\epsilon, k)$-approximated concept class, with probability greater than $(1-\delta)$, and  error up to 
 \begin{align*}
2\Pek +3\epsilon +O\Big(\sqrt{\frac{2~d^{k+1}}{n}\log\frac{en}{d^{k+1}}}\Big)+\sqrt{\frac{1}{2n}\log\frac{1}{\delta}}
\end{align*}
where $d$ is the input dimension and $n$ is the sample size. 
\end{theorem}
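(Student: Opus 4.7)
The plan is to combine a uniform convergence argument over polynomial threshold functions with the $p-\theta$ variant of Lemma \ref{lem:norm2 sign} (namely Lemma \ref{lem:Pe of sign p-theta} announced in Section \ref{sec:sign}), which supplies the factor-of-$2$ improvement over the naive $8\Pek$ bound. Fix $c^\star\in\mathcal{C}$ achieving the optimal loss $\Pek$, and choose $p^\star\in\Pk$ with $\|c^\star-p^\star\|_{2,D}\le\epsilon$ as guaranteed by the $(\epsilon,k)$-approximation property. Let $\hat p$ and $\hat\theta$ denote the two outputs of Algorithm \ref{alg:L2 poly}, so $\hat p$ is the empirical $\Ltwo$-minimizer over $\Pk$ and $\hat\theta$ is the empirical 0-1 minimizer of $\sign[\hat p-\cdot]$.

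First I would record two empirical-to-population bridges. (i) The class $\mathcal{H}=\{\sign[p(\bfx)-\theta]:p\in\Pk,\,\theta\in[-1,1]\}$ of polynomial threshold functions has VC dimension at most $O(d^{k+1})$, since a degree-$k$ polynomial in $d$ variables has at most that many monomial coefficients (and $\theta$ can be absorbed into the constant term). Standard VC theory (Corollary 3.19 of \citep{Mohri2018}) then gives, with probability $\ge 1-\delta/2$,
\[\sup_{g\in\mathcal{H}}\bigl|\Loss(g)-\Lossemp(g)\bigr|\le O\Bigl(\sqrt{\tfrac{d^{k+1}}{n}\log\tfrac{en}{d^{k+1}}}\Bigr).\]
(ii) For the single fixed concept $c^\star$, Hoeffding gives $\Lossemp(c^\star)\le\Pek+\sqrt{\tfrac{1}{2n}\log\tfrac{2}{\delta}}$ with probability $\ge 1-\delta/2$.

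Next I would invoke Lemma \ref{lem:Pe of sign p-theta} under the empirical distribution $\Demp$ with $h=\hat p$, exploiting that $\hat\theta$ is the empirical threshold minimizer, to obtain
\[\Lossemp(\sign[\hat p-\hat\theta])\le 2\,\Pekemp+R\bigl(\|\hat p-\hat Y_{MMSE}\|_{2,\Demp}\bigr),\]
for the lemma's residual function $R$. The crucial step is to control $\|\hat p-\hat Y_{MMSE}\|_{2,\Demp}$: since $\hat p$ minimizes the empirical square loss over $\Pk\ni p^\star$, one has $\|Y-\hat p\|_{2,\Demp}\le\|Y-p^\star\|_{2,\Demp}$, and transferring to the population together with the triangle inequality $\|Y-p^\star\|_{2,D}\le\|Y-c^\star\|_{2,D}+\|c^\star-p^\star\|_{2,D}\le 2\sqrt{\Pek}+\epsilon$ (using Equation \eqref{eq:error and inner prod} to write $\|Y-c^\star\|_{2,D}^2=4\Pek$) yields a bound of order $\sqrt{\Pek}+\epsilon+\text{stat err}$ on the residual's argument. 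Chaining with the uniform convergence of (i) and the inequality $\Pekemp\le\Lossemp(c^\star)$ combined with (ii) gives the stated bound $2\Pek+3\epsilon+O(\sqrt{d^{k+1}/n\,\log(en/d^{k+1})})+\sqrt{\log(1/\delta)/(2n)}$; the $3\epsilon$ coefficient accumulates from the triangle inequality, the square-to-norm transition, and the linearization of $R$.

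The main obstacle is keeping the constant multiplying $\Pek$ at $2$ rather than $8$. This relies entirely on Lemma \ref{lem:Pe of sign p-theta}: its proof must exploit the freedom to optimize the threshold $\theta$ in order to eliminate the quadratic slack between $\mathbb{1}\{Y\neq\sign[\cdot-\theta]\}$ and $(Y-(\cdot-\theta))^2$ that inflates the naive bound by a factor of $4$. A secondary but non-trivial piece of bookkeeping is that $\hat p$ minimizes the empirical square distance to $Y$ rather than to the empirical Bayes predictor $\hat Y_{MMSE}$; one must subtract the empirical Bayes variance $\EEemp[(Y-\hat Y_{MMSE})^2]$ in order to recast the guarantee in the form the lemma requires without spoiling the factor $2$ in front of $\Pek$.
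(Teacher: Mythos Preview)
Your proposal misreads Lemma~\ref{lem:Pe of sign p-theta}. That lemma does \emph{not} have the shape $2\,\Pekemp+R(\|\hat p-\hat Y_{MMSE}\|)$; its conclusion is simply
\[
\EE_\theta\bigl[\Lossemp(\sign[p-\theta])\bigr]\;\le\;\tfrac12\,\|Y-p\|_{2,\Demp}^2,
\]
with no MMSE term and no residual function. Consequently the paper's route is far more direct than what you sketch: since $\hat\theta$ minimizes the empirical $0$--$1$ loss over thresholds, $\Lossemp(\hat g)\le\tfrac12\|Y-\hat p\|_{2,\Demp}^2\le\tfrac12\|Y-p^\star\|_{2,\Demp}^2$; taking expectation in the sample (with $p^\star$ fixed) turns the right side into $\tfrac12\|Y-p^\star\|_{2,D}^2$, and then Minkowski plus $\|Y-c^\star\|_{2,D}^2=4\Pek$ gives $\tfrac12(\|Y-c^\star\|_{2,D}+\epsilon)^2\le 2\Pek+\tfrac52\epsilon$. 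The VC step is applied only at the end to pass from empirical to true loss of $\hat g$. No empirical Bayes predictor, no $\Pekemp$, no residual ever enters.

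The detour you propose is not just unnecessary but would in fact fail to deliver the factor $2$. Under $\Demp$ with continuous inputs and no repeated feature vectors, $\hat Y_{MMSE}=Y$, so $\|\hat p-\hat Y_{MMSE}\|_{2,\Demp}=\|Y-\hat p\|_{2,\Demp}$, which is only controlled by $\|Y-p^\star\|_{2,D}\le 2\sqrt{\Pek}+\epsilon$. Feeding that into a Lemma~\ref{lem:norm2 sign}--type residual $U(x)=x^3+\tfrac32 x^2+\tfrac32 x$ yields a leading $O(\sqrt{\Pek})$ term, not something absorbable into $2\Pek+3\epsilon$. The point of Lemma~\ref{lem:Pe of sign p-theta} is precisely that the randomized (hence optimized) threshold converts the indicator loss \emph{directly} into $\tfrac12(Y-p)^2$, so the factor $2$ appears in one step from $\tfrac12\cdot 4\Pek$ and not via any MMSE decomposition.
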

Note that when changing the inputs from binary to non-binary, the $\mathcal{L}_2$ polynomial regression is not necessarily agnostic PAC learner as the scalar increases to $2\Popt$.

 This result is derived using the following lemma proved in Appendix \ref{proof:lem:Pe of sign p-theta},  eliminating the need for randomized rounding. 
 \begin{lemma}\label{lem:Pe of sign p-theta}
Suppose $\theta$ is a random variable with the probability density function  $f_\theta(t)=1-|t|$, for  $t\in[-1,1]$. Then, the following bound holds for any polynomial $p$ 
\begin{align*}
\EE_\theta\Big[\Lossemp(\sign[p(\bfX)-\theta])\Big] \leq \frac{1}{2}\norm{Y-{p}}_{2, \Demp}^2.
\end{align*}
\end{lemma}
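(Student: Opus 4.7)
The plan is to reduce the statement to a pointwise bound on each training example and then average. For a fixed sample $(x_i, y_i)$ with $y_i\in\pmm$, note that $\sign[p(x_i)-\theta]\neq y_i$ exactly when $y_i(p(x_i)-\theta)<0$, i.e.\ when $\theta$ lies on one side of $p(x_i)$: for $y_i=+1$ the bad event is $\{\theta>p(x_i)\}$, and for $y_i=-1$ it is $\{\theta<p(x_i)\}$ (the measure-zero tie $\theta=p(x_i)$ contributes nothing). So the claim will follow once I establish, for every real $c$,
\[
\PP_\theta\{\theta>c\}\ \le\ \tfrac{1}{2}(1-c)^2\qquad\text{and}\qquad \PP_\theta\{\theta<c\}\ \le\ \tfrac{1}{2}(1+c)^2,
\]
because these are exactly $\tfrac{1}{2}(y_i-p(x_i))^2$ in the two label cases; summing over $i$ and dividing by $n$ then yields the advertised $\tfrac{1}{2}\|Y-p\|_{2,\Demp}^2$ bound.

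Next I would verify these two tail bounds by direct integration of the triangular density $f_\theta(t)=1-|t|$ on $[-1,1]$. By symmetry it suffices to bound $\PP\{\theta>c\}$. Split on the sign of $c$: if $c\ge 1$, the probability is $0\le(1-c)^2/2$; if $c\le -1$, it equals $1$, and indeed $(1-c)^2/2\ge 2\ge 1$; if $c\in[0,1]$, compute $\int_c^1(1-t)\,dt=(1-c)^2/2$, giving equality; if $c\in[-1,0]$, compute $\int_c^0(1+t)\,dt+\int_0^1(1-t)\,dt=\tfrac12-c-\tfrac{c^2}{2}$, which is $\le\tfrac12-c+\tfrac{c^2}{2}=(1-c)^2/2$ since $-c^2\le c^2$. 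The symmetric statement $\PP\{\theta<c\}\le(1+c)^2/2$ follows identically (or by sending $\theta\mapsto-\theta$ and $c\mapsto -c$, using that $f_\theta$ is even).

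Having those two pointwise inequalities, the proof concludes by taking the expectation over $\theta$ inside the empirical average:
\[
\EE_\theta[\Lossemp(\sign[p-\theta])]=\frac{1}{n}\sum_{i=1}^n \EE_\theta\big[\mathbbm{1}\{y_i\neq\sign[p(x_i)-\theta]\}\big]\le \frac{1}{n}\sum_{i=1}^n \frac{(y_i-p(x_i))^2}{2}=\tfrac12\|Y-p\|_{2,\Demp}^2.
\]
The only subtle point is the case analysis in the integration step, in particular ensuring the bound remains correct when $p(x_i)$ lies outside $[-1,1]$ (which is natural since $p$ is an unconstrained polynomial); this is the step I expect to require the most care, but it is resolved cleanly by the four-way split above. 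Everything else is bookkeeping, and no probabilistic concentration or VC argument is needed for this lemma.
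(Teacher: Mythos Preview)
Your proposal is correct and follows essentially the same route as the paper: both reduce to a per-sample bound, characterize the error event as $\theta$ lying on the wrong side of $p(x_i)$, and verify $\PP_\theta\{\text{error}\}\le \tfrac12(y_i-p(x_i))^2$ by the same four-way case split on the location of $p(x_i)$ relative to $[-1,0,1]$, using the identical integrals of the triangular density. The only cosmetic difference is that you phrase the error event as a tail probability $\PP\{\theta>c\}$ (resp.\ $\PP\{\theta<c\}$), whereas the paper describes it as ``$\theta$ between $y_i$ and $p(x_i)$''; since $\theta$ is supported on $[-1,1]$ and $y_i\in\{-1,1\}$, these coincide.
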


\section*{Conclusion}
This paper studies PAC learning using algorithms based on $\mathcal{L}_2$ polynomial regression. Mainly, we show that $\mathcal{L}_2$ based algorithms are PAC learners for the $k$-junta class. Moreover, we present a more efficient PAC learning algorithm  based on the (uniform) Boolean Fourier expansion.
Our approach relies on two frameworks, one  connecting MMSE and PAC and the other connecting PAC and the Boolean Fourier expansion.   With this approach and powerful tools for analyzing vector spaces, we derive tighter bounds between the 0-1 loss and the square loss. 



\bibliography{main}

\newpage
\onecolumn
\appendices

\section{Proof of Lemma \ref{lem:MMSE and 01}}\label{proof:lem:MMSE and 01}

\begin{proof}
From \eqref{eq:error and inner prod} in the main text, the generalization error of $g$ can be written as $\frac{1}{2}-\frac{1}{2}\<Y,g\>$. 
This inner product equals to the following 
\begin{align*}
\<Y,g\> &= \EE\big[Y g(Z)\big] = \EE_Z\Big[\EE_{Y|Z}\big[Yg(Z)]~|~Z\big]\Big]\\
&= \EE_Z\Big[\EE_{Y|Z}\big[Y|Z\big] g(Z)\Big].
\end{align*}

Let $Y_{MMSE}=\EE[Y|Z]$. 
 Hence, we obtain that
\begin{align}\label{eq:loss and Ymmse}
\PP\Big\{Y&\neq  g(Z)\Big\} = \frac{1}{2}-\frac{1}{2}\<{Y}_{MMSE},g\>
\end{align}

Note that
\begin{align*}
\prob{Y\neq g(\bfX)}&=\frac{1}{2}-\frac{1}{2}\<{Y}_{MMSE}, g\>_D \geq \frac{1}{2}-\frac{1}{2}\<|{Y}_{MMSE}|, |g|\>_D\geq \frac{1}{2}-\frac{1}{2}\norm{{Y}_{MMSE}}_{1,D},
\end{align*}
where the last inequality follows as $|g(Z)|=1$. Therefore, we get the bound 
$opt_Z \geq \frac{1}{2}-\frac{1}{2}\norm{{Y}_{MMSE}}_{1,D}$.
Hence, we established a lower-bound on $opt_Z$. Next, we show that this bound is achievable.  For that construct a predictor as $g^*=\sign[{Y}_{MMSE}]$. Then, from the above argument, the generalization error of such $g$ equals
\begin{align*}
\prob{Y\neq \sign[{Y}_{MMSE}]} &=\frac{1}{2}-\frac{1}{2}\<{Y}_{MMSE}, \sign[{Y}_{MMSE}]\>_D =  \frac{1}{2}-\frac{1}{2}\norm{{Y}_{MMSE}}_{1,D},
\end{align*}
where the last equality follows due to the identity $\<h, \sign[h]\>= \norm{h}_1$ for any function $h$. 
Therefore, we showed that the lower bound is achievable which implies that 
$opt_Z = \frac{1}{2}-\frac{1}{2}\norm{{Y}_{MMSE}}_{1,D}$ and that $g^*=\sign[{Y}_{MMSE}]$ is the optimal predictor.
\end{proof}
 \section{Proof of Lemma \ref{lem:norm2 sign}}\label{app:proof norm2 sign}
\begin{proof}
For shorthand, let $f(z)=\EE[Y|z]$ for any $z\in \mathcal{Z}$. Hence, $f(Z)=Y_{MMSE}$. From  \eqref{eq:loss and Ymmse} in the proof of Lemma \ref{lem:MMSE and 01}, the generalization error of $\sign[h]$ can be written 
 Hence, we obtain that
\begin{align}\nonumber
\PP\Big\{Y&\neq  \sign[
h(Z)]\Big\} = \frac{1}{2}-\frac{1}{2}\<f,\sign[h]\>
\end{align}
Recall that $\norm{f}_{2,D}:= \sqrt{\EE_D[f(X)^2]}$. Hence, $\norm{a-b}_{2, D}^2=\norm{a}_{2, D}^2+\norm{b}_{2, D}^2-2\<a,b\>$. Therefore,
\begin{align*}
\<f,\sign[h]\>&= \frac{1}{2}\big(\norm{f}_{2, D}^2+\norm{\sign[h]}_{2, D}^2-\norm{f-\sign[h]}_{2, D}^2\big)\\
&=\frac{1}{2}\big(\norm{f}_{2, D}^2+1-\norm{f-\sign[h]}_{2, D}^2\big),
\end{align*}
where we used the fact that $|\sign[h]|=1$. As a result, 
\begin{align}\label{eq:pe 2 norm stoch}
\PP\Big\{Y&\neq  \sign[h(Z)]\Big\} = \frac{1}{4}\big(1-\norm{f}_{2, D}^2+\norm{f-\sign[\hJ]}_{2, D}^2\big).
\end{align}

 In what follows, we bound the $\norm{f-\sign[\hJ]}_{2, D}^2$. By adding and subtracting $h$, we have that
\begin{align*}
 \norm{f-\sign[h]}_{2, D}^2  &\stackrel{(a)}{\leq} \Big(\norm{f-h}_{2, D}+\norm{h-\sign[h]}_{2, D}\Big)^2\\\numberthis \label{eq:mismatch_2norm}
 &= \Big(\norm{f-h}_{2, D}^2+\underbrace{\norm{h-\sign[h]}^2_{2, D}}_{\text{(I)}}+2\norm{f-h}_{2, D}\underbrace{\norm{h-\sign[h]}_{2, D}}_{\text{(II)}} \Big),
\end{align*}
where $(a)$  follows from the Minkowski's inequality for $2$-norm. Next, we provide separate bounds for the terms (I) and (II):

\noindent\textbf{Bounding (I):} Note that  $|h-\sign[h]|= |1-|h||.$ Therefore,
\begin{align*}
\text{(I)}=\norm{h-\sign[h]}^2_{2, D}&=\EE\left[(1-|h(Z)|)^2\right]\\\numberthis\label{eq:fbar-sign}
&= 1+\norm{h}_{2, D}^2-2\norm{h}_{1, D}.
\end{align*}

\noindent\textbf{Bounding (II):} From \eqref{eq:fbar-sign}, we have
\begin{align*}
\norm{h-\sign[h]}_{2, D}^2 &= 1+\norm{h}_{2, D}^2-2\norm{h}_{1, D}\\
&\stackrel{(a)}{\leq} 1+ 2( \norm{f}^2_{2, D}+ \norm{f-h}^2_{2, D})-2\norm{h}_{1, D}\\
&\stackrel{(b)}{=} 1+ 2( \norm{f}^2_{2, D}+ \norm{f-h}^2_{2, D})-2\big(\norm{f}_{1, D}+ ( \norm{h}_{1, D}-\norm{f}_{1, D})\big)\\
&= 1+ 2( \norm{f}^2_{2, D}- \norm{f}_{1, D})+ 2 \norm{f-h}^2_{2, D}-2\big( \norm{h}_{1, D}-\norm{f}_{1, D}\big)\\
&\stackrel{(c)}{\leq}  1+ 2 \norm{f-h}^2_{2, D}-2\big( \norm{h}_{1, D}-\norm{f}_{1, D}\big)\\\numberthis \label{eq:temp1}
&\stackrel{(d)}{\leq } 1+ 2 \norm{f-h}^2_{2, D}+2\norm{f-h}_{2, D},
\end{align*}
where $(a)$ follows from the Minkowski's inequality for $2$-norm and the inequality $(x+y)^2\leq 2(x^2+y^2)$. Equality $(b)$ follows by adding and subtracting  $\norm{f}_1$. Inequality $(c)$ holds as $|f(x)|\leq 1$ implying that $\norm{f}^2_2\leq  \norm{f}_1$. Lastly,  $(d)$ holds because of the following chain of inequalities
\begin{align}\label{eq:normdiff_leq_eps}
\Big|\norm{f}_{1, D}-\norm{h}_{1, D}\Big| \leq \norm{f-h}_{1, D}\leq \norm{f-h}_{2, D},
\end{align}
where the first  is due to the Minkowski's inequality for $1$-norm and 
the second  is due to Holder's. 

Next, we show that the quantity $\big\| h-\sign[h_\mathcal{J}]\big\|_{2, D}$ without the square is upper bounded by the same term as in the right-hand side of \eqref{eq:temp1}. That is 
\begin{align}\label{eq:bound on (II)}
\text{(II)} &= \big\| h-\sign[h_\mathcal{J}]\big\|_{2, D}\leq \lambda_1\deq 1+ 2 \norm{f-h}^2_{2, D}+2\norm{f-h}_{2, D}.
\end{align}
The argument is as follows: if $\big\| h-\sign[h_\mathcal{J}]\big\|_{2, D}$  is 
less than one, then the upper bound holds trivially as $\lambda_1\geq 1$; 
otherwise, this quantity is less than its squared and, hence, the upper-bound holds.

Now combining \eqref{eq:bound on (II)}, \eqref{eq:fbar-sign} and \eqref{eq:mismatch_2norm} gives
 \begin{align}\label{eq:bound f sign h}
 \norm{f-\sign[h]}_{2, D}^2  &\leq \norm{f-h}^2_{2, D}+1+\norm{h}_{2, D}^2-2\norm{h}_1 +2\lambda_1\norm{f-h}_{2, D}.
\end{align}
From this bound and \eqref{eq:pe 2 norm stoch}, the error probability satisfies:
\begin{align}\label{eq:up1}
4\PP\Big\{Y&\neq \sign[
h(Z)]\Big\} \leq 2-2\norm{h}_{1, D}+ \underbrace{ \norm{h}_{2, D}^2- \norm{f}_{2, D}^2}_{\text{(III)}}+\norm{f-h}^2_{2, D} +2\lambda_1\norm{f-h}_{2, D}. 
\end{align}
In what follows, we bound the term denoted by (III).\\
\noindent\textbf{Bounding (III):} From the Minkowski's inequality for $2$-norm, we have
\begin{align*}
\norm{h}_{2, D}^2 &\leq \Big(\norm{f}_{2, D}+\norm{h-f}_{2, D}  \Big)^2\\
&=\norm{f}^2_{2, D}+\norm{h-f}^2_{2, D}+2\norm{f}_{2, D} \norm{h-f}_{2, D}\\
&\leq \norm{f}^2_{2, D}+{\norm{h-f}^2_{2, D}+2 \norm{h-f}_{2, D}}
\end{align*}
where the second inequality is due Bessel's inequality implying that $\norm{f}_{2, D}\leq 1$. Hence, the term (III) in \eqref{eq:up1} is upper bounded as
\begin{align}\label{eq:bound on (I)}
\text{(III)}\leq \lambda_2\deq \norm{h-f}^2_{2, D}+2 \norm{h-f}_{2, D}.
\end{align}

As a result of the bounds in \eqref{eq:up1}, \eqref{eq:bound on (I)}, we obtain that 
\begin{align*}
4\PP\Big\{Y &\neq \sign[h(Z)]\Big\}  \leq 2-2\norm{h}_{1, D}+ \lambda_2+\norm{f-h}^2_{2, D}+  2 \lambda_1 \norm{f-h}_{2, D}\\
&= 2-2\norm{f}_{1, D}+2\Big(\norm{f}_{1, D}-\norm{h}_{1, D}\Big)  +\lambda_2+\norm{f-h}^2_{2, D}+ 2 \lambda_1 \norm{f-h}_{2, D}\\
&\leq 2-2\norm{f}_{1, D}+2\norm{f-h}_{2, D}+ \lambda_2+\norm{f-h}^2_{2, D}+  2 \lambda_1 \norm{f-h}_{2, D},
\end{align*}
where the last inequality is due to \eqref{eq:normdiff_leq_eps}. Therefore, from the definition of $\lambda_1$ and $\lambda_2$, and the function $U$ in the statement of the lemma, we obtain 
\begin{align*}
4\PP\Big\{Y\neq \sign[
h(Z)]\Big\}  \leq 2-2\norm{f}_{1, D}+4U(\norm{f-h}_{2, D}).
\end{align*}
This completes the proof by recalling that $f(z)=\EE[Y|z]$ and that from Lemma \ref{lem:MMSE and 01}, $opt_Z=\frac{1}{2}-\frac{1}{2}\norm{f}_{1, D}$.
\end{proof}

\section{Proof of Lemma \ref{lem:Fourier framework}}\label{proof:lem:Fourier framework}
\begin{proof}
From Lemma \ref{lem:Fourier and loss} in the main text, the generalization error of $g=\sign[\hJ]$ can be written as 
\begin{align*}
     \Loss(g) &= \frac{1}{2}-2^{d-1}\sum_{\mathcal{S}\subseteq [d]} \aS \gS,
 \end{align*}
 Note that since $g$ depends only on the coordinates $\mathcal{J}$, then $\gS=0$ for any $\mathcal{S}\nsubseteq \mathcal{J}$.
 Hence, the above equation simplifies to 
 \begin{align*}
     \Loss(g) &= \frac{1}{2}-2^{d-1}\sum_{\mathcal{S}\subseteq \mathcal{J}} \aS \gS.
 \end{align*}
 Note that $\pS$'s are orthogonal for different $\mathcal{S}$'s and $\sum_{\bfx} \pS(\bfx)^2=2^d$. Hence,  
  \begin{align*}
     \Loss(g) &= \frac{1}{2}-\frac{1}{2} \sum_{\bfx}  \Big( \sum_{\mathcal{S}\subseteq \mathcal{J}} \aS \pS(\bfx)\Big) \Big( \sum_{\mathcal{S}\subseteq \mathcal{J}} \gS \pS(\bfx)\Big)\\
     & = \frac{1}{2}-\frac{1}{2} \sum_{\bfx} \fJ(\bfx) g(\bfx),
 \end{align*}
 where $\fJ\equiv \sum_{\mathcal{S}\subseteq \mathcal{J}} \aS \pS$. By multiplying and dividing $2^d$, the above summation equals to the inner product on the uniform distribution as 
 \begin{align*}
 \sum_{\bfx} \fJ(\bfx) g(\bfx) = 2^d\<\fJ, g\>_\unif.
 \end{align*}
 Hence, with the definition of $g$, we obtain that
\begin{align}\nonumber
 \Loss(g) = \frac{1}{2}-\frac{2^d}{2}\<\fJ,\sign[\hJ]\>
\end{align}
Using a similar argument in deriving \eqref{eq:pe 2 norm stoch}, we can show that 
\begin{align}\label{eq:pe 2 norm stoch fourier}
\Loss(g) =\frac{1}{2} - \frac{2^d}{4}\big(1+\norm{\fJ}_{2,\unif}^2-\norm{\fJ-\sign[\hJ]}_{2,\unif}^2\big).
\end{align}
Notice that this equation is different from \eqref{eq:pe 2 norm stoch} because of the factor $2^d$ and that the norm quantities are taken with respect to the uniform distribution.  We proceed with bounding the $2$-norm quantities. Note that we can apply exactly the same argument used to derive in \eqref{eq:bound f sign h}, as it holds for any underlying distribution.   The $2$-norm quantity above is upper-bounded as follows
\begin{align*}
 &\norm{\fJ-\sign[\hJ]}_{2, \unif}^2  \leq  \norm{\fJ-\hJ}^2_{2,\unif}+1+\norm{\hJ}_{2,\unif}^2-2\norm{\hJ}_{1,\unif} + 2\lambda_1 \norm{\fJ-\hJ}_{2, \unif},
\end{align*}
where $\lambda_1 = 1+ 2 \norm{\fJ-h}^2_{2, \unif}+2\norm{\fJ-\hJ}_{2, \unif}$.
As a result, the loss of $g$ satisfies 
\begin{align*}
\Loss(g) &=\frac{1}{2} - \frac{2^d}{4}\Big(2\norm{\hJ}_{1,\unif} +\underbrace{\big(\norm{\fJ}_{2,\unif}^2-\norm{\hJ}_{2,\unif}^2\big)}_{\text{(I)}} -\norm{\fJ-\hJ}^2_{2,\unif}-2\lambda_1 \norm{\fJ-\hJ}_{2, \unif} \Big).
\end{align*}
Note that $\text{(I)}\geq -\lambda_2$  with $\lambda_2\deq \norm{\hJ-\fJ}^2_{2, \unif}+2 \norm{\hJ-\fJ}_{2, \unif}$ as in \eqref{eq:bound on (I)}.
Next, by adding and subtracting $2\norm{\fJ}_{1, \unif}$, we have that
\begin{align*}
\Loss(g) &\leq \frac{1}{2} - \frac{2^d}{4}\Big(2\norm{\fJ}_{1, \unif}+ 2\underbrace{\big(\norm{\hJ}_{1,\unif}-\norm{\fJ}_{1, \unif}\big)} -\lambda_2 -\norm{\fJ-\hJ}^2_{2,\unif}-2\lambda_1 \norm{f-h}_{2, \unif} \Big)\\
&\leq \frac{1}{2} - \frac{2^d}{4}\Big(2\norm{\fJ}_{1, \unif}- 2\norm{\fJ-\hJ}_{2,\unif} -\lambda_2 -\norm{\fJ-\hJ}^2_{2,\unif}-2\lambda_1 \norm{\fJ-\hJ}_{2, \unif} \Big),
\end{align*}
where we used \eqref{eq:normdiff_leq_eps} to derive the inequality. Therefore, from the definition of $\lambda_1, \lambda_2$ and $U(x)=x^3+\frac{3}{2}x^2+\frac{3}{2}x$ we have that 
\begin{align*}
\Loss(g) &\leq\frac{1}{2} - \frac{2^d}{2}\norm{\fJ}_{1, \unif} + 2^dU(\norm{\fJ-\hJ}_{2,\unif}).
\end{align*}
Lastly, we further bound this expression. Note that $\norm{\cdot}_1 = 2^d\norm{\cdot }_{1, \unif}$. Then, we have that
\begin{align*}
\Loss(g) & = \frac{1}{2} - \frac{1}{2}\norm{\fJ}_{1} + 2^d U( \norm{\fJ-\hJ}_{2, \unif})\leq \frac{1}{2} - \frac{1}{2}\norm{\fJ}_{1} +  U( 2^d \norm{\fJ-\hJ}_{2, \unif}),
\end{align*}
where the last inequality holds by bringing $2^d$ inside $U(\cdot)$. This completes the proof of the lemma.
\end{proof}

\section{Proof of Theorem \ref{thm:L2 polynomials}}\label{proof:thm:L2 polynomials}
To derive an upper bound on the empirical error of $\hat{g}$, we first consider a weaker version of the algorithm. The idea is to select $\theta$ randomly instead of optimizing it as in the algorithm. For that, we use Lemma \ref{lem:Pe of sign p-theta} in Section \ref{sec:other classes}. %
%
Consequently, from the lemma and due the fact that $\theta$ in the algorithm is selected to minimize the empirical error, we obtain that 
\begin{align}\label{eq:pemp norm2 bound}
\Pemp\Big\{Y\neq \hat{g}(\bfX)\Big\} &\leq  \frac{1}{2}\norm{Y-\hat{p}}_{2, \Demp}^2,
\end{align}
where $\hat{p}$ is the output of $\Ltwo$-polynomial regression and $\hat{g}\equiv \sign[\hat{p}-\theta]$, as in Algorithm \ref{alg:L2}.
Let $c^*$ be the predictor with minimum generalization error in the $(\epsilon, k)$-approximated concept class. Let $p$ be a degree $k$ polynomial such that $\norm{c^*-p}_2\leq \epsilon$.  Since $\hat{p}$ minimizes the empirical $2$-norm, then the right-hand side of \eqref{eq:pemp norm2 bound} satisfies
\begin{align}\label{eq:Y phat and pstar 2norm}
\frac{1}{2}\norm{Y-\hat{p}}_{2, \Demp}^2 \leq \frac{1}{2}\norm{Y-p^*}_{2, \Demp}^2.
\end{align}

 We proceed by taking the expected error of the empirical error with respect to the random training samples. From \eqref{eq:pemp norm2 bound} and \eqref{eq:Y phat and pstar 2norm} we obtain the following inequalities 
\begin{align*}
\EE\Big[ \Pemp\Big\{Y\neq \hat{g}(\bfX)\Big\}\Big]  &\leq \frac{1}{2}\EE\Big[ \norm{Y-p^*}_{2, \Demp}^2\Big] = \frac{1}{2}\norm{Y-p^*}_{2, D}^2\\
&\stackrel{(a)}{\leq} \frac{1}{2}\Big( \norm{Y-c^*}_{2, D}+\norm{p^*-c^*}_{2, D}\Big)^2\\
&\leq \frac{1}{2}\Big( \norm{Y-c^*}_{2, D}+\epsilon\Big)^2\\
&\stackrel{(b)}{\leq}  \frac{1}{2}\Big( \norm{Y-c^*}^2_{2, D}+ 4\epsilon + \epsilon^2\Big)\\\numberthis \label{eq:up for E_Pemp}
&\stackrel{(c)}{\leq} 2\Pek + \frac{5}{2}\epsilon,
\end{align*}
where (a) holds from Minkowski's inequality for $2$-norm,  (b) holds as $\norm{Y-c^*}_{2, D}\leq 2$, and (c) holds because of the second equality in \eqref{eq:error and inner prod} and that $\Pek=\PP\{Y\neq c^*(\bfX)\}$.

Next, we connect the empirical error of $\hat{g}$ to its generalization error.  Note that the \ac{VC} dimension of all functions of the form $\sign[p]$ for some polynomial of degree upto $k$ does not exceed $d^{k+1}$. Therefore, from VC theory ( See Corollary 3.19 in \citep{Mohri2018}) for any $\delta$, with probability at least $(1-\delta)$, the following inequality holds 
\begin{align}\label{eq:pe vs pe empirical}
\prob{Y\neq \hat{g}(\bfX)} \leq \Pemp\Big\{Y\neq \hat{g}(\bfX)\Big\}  &+\sqrt{\frac{2~d^{k+1}}{n}\log\frac{en}{d^{k+1}}}
+\sqrt{\frac{\log\frac{1}{\delta}}{2n}}.
\end{align}
 Therefore, the proof is complete by taking the expectation and combining it with the last bound in \eqref{eq:up for E_Pemp}.

\subsection{Proof of Lemma \ref{lem:Pe of sign p-theta}}\label{proof:lem:Pe of sign p-theta}
Note that $y\neq \sign(p(\bfx)-\theta)$, if $\theta$ is between $y$ and ${p}(\bfx)$. Hence, the expected empirical error of $\sign[p(\bfX)-\theta]$ with respect to the random $\theta$ equals to   
\begin{align}\nonumber
\EE_\theta\Big[\Pemp\Big\{Y&\neq \sign[p(\bfX)-\theta] \Big\}\Big]\\\nonumber
&=\frac{1}{n}\sum_{i}\EE_\theta\Big[\11\big\{y_i\neq \sign(p(\bfx_i)-\theta)\big\}\big]\\\label{eq:y_neq_ghat}
&=  \frac{1}{n}\sum_{i}\underbrace{\prob{  \theta \in [{p}(\bfx_i),  y_i]\medcup [y_i, {p}(\bfx_i)]}}_{\PP_i}.
\end{align}
Next, we show that $\PP_i\leq \frac{1}{2}(y_i-p(\bfx_i))^2$ for all $(\bfx_i, y_i)$'s. Suppose $y_i=1$. If  $p(\bfx_i)>1$, then $\PP_i=0$ as $\theta \leq 1$. If $p(\bfx_i)\in [0,1]$, then 
\begin{align*}
\PP_i &= \prob{\theta \in [p(\bfx_i), 1]} =\int_{p(\bfx_i)}^1 (1-t)dt\\
 & =   \frac{1}{2}\big(1-p(\bfx_i)\big)^2 = \frac{1}{2}\big(y_i-p(\bfx_i)\big)^2 . 
\end{align*}
If $p(\bfx_i)\in [-1,0]$, then
\begin{align*}
\PP_i &=\prob{\theta \in [p(\bfx_i), 1]} = \int_{p(\bfx_i)}^1 1-|t| dt\\
& = \frac{1}{2} +  \int_{p(\bfx_i)}^0 (1+t)dt\\
& = \frac{1}{2} - p(\bfx_i) -  \frac{1}{2}(p(\bfx_i))^2\\
& \leq \frac{1}{2}(1+|p(\bfx_i)|)^2= \frac{1}{2}(y_i-p(\bfx_i))^2. 
\end{align*}
Lastly, if $p(\bfx_i)< -1$, then $\PP_i=1$ because $\theta \geq -1$. In this case also  $\PP_i \leq  \frac{1}{2}(y_i-p(\bfx_i))^2$. The case for $y_i=-1$ follows by symmetricity. Hence, we obtain the following inequality	 
\begin{align*}
\EE_\theta\Big[\Pemp\Big\{Y\neq \hat{g}(\bfX)\Big\}\Big]\leq \frac{1}{n}\sum_{i} \frac{1}{2}\big(y_i-p(\bfx_i)\big)^2.
\end{align*}
The proof is complete by noting that the right-hand side equals to $\frac{1}{2}\norm{Y-p}_{2, \Demp}^2$.

\end{document}